\newif\ifdraft \draftfalse
\newif\iffull \fulltrue
\definecolor{DarkGreen}{rgb}{0.1,0.5,0.1}
\definecolor{DarkRed}{rgb}{0.5,0.1,0.1}
\definecolor{DarkBlue}{rgb}{0.1,0.1,0.5}
\newcommand\cP{\mathcal{P}}
\newcommand\cR{\mathbb{R}}
\newcommand{\E}{\mathop{\mathbb{E}}}
\newcommand\bmid{\mathrel{\Big|}}
\newcommand{\norm}[1]{\left\lVert#1\right\rVert}
\newcommand{\Tau}{\mathcal{T}}
\newcommand{\Lagr}{\mathcal{L}}
\newcommand{\ind}{\mathbbm{1}}
\newcommand{\proj}{\textit{proj}}
\newcommand{\err}{\textit{err}}
\def\epsilon{\varepsilon}
\DeclareMathOperator*{\argmin}{\mathrm{argmin}}
\DeclareMathOperator*{\argmax}{\mathrm{argmax}}
\newtheorem{theorem}{Theorem}[section]
\newtheorem{lemma}[theorem]{Lemma}
\newtheorem{remark}[theorem]{Remark}
\newtheorem{corollary}[theorem]{Corollary}
\newtheorem{definition}[theorem]{Definition}
\newcommand*\samethanks[1][\value{footnote}]{\footnotemark[#1]}
\title{An Algorithmic Framework for Fairness Elicitation}
\author{Christopher Jung\thanks{University of Pennsylvania}, Michael Kearns\samethanks, Seth Neel\samethanks, \\ Aaron Roth\samethanks, Logan Stapleton\thanks{University Of Minnesota}, Zhiwei Steven Wu\thanks{Carnegie Mellon University}}
\begin{document}

\maketitle

\begin{abstract}
We consider settings in which the right notion of fairness is not captured by simple mathematical
definitions (such as equality of error rates across groups), but might be more complex and nuanced and 
thus require {\em elicitation\/} from individual or collective stakeholders. We introduce a framework in
which pairs of individuals can be identified as requiring (approximately) equal treatment under a learned
model, or requiring ordered treatment such as ``applicant Alice should be at least as likely to receive
a loan as applicant Bob". We provide a provably convergent and {\em oracle efficient\/} algorithm for 
learning the most accurate model subject to the elicited fairness constraints, and prove generalization
bounds for both accuracy and fairness. This algorithm can also combine the elicited constraints with
traditional statistical fairness notions, thus ``correcting" or modifying the latter by the former. We report
preliminary findings of a behavioral study of our framework using human-subject fairness constraints
elicited on the COMPAS criminal recidivism dataset.
\end{abstract}

\section{Introduction}

The literature on algorithmic fairness has consisted largely of researchers proposing and showing how to impose technical definitions of fairness \cite{DworkHPRZ12,KC12, ZVGG17, msr_reduction, ADW19,gerry,verma2018fairness, narayanan2018translation, joseph2016fairness, HardtPNS16, ADW19, samsharad}. Because these imposed notions of fairness are described analytically, they are typically simplistic, and often have the form of equalizing simple error statistics across groups. Our starting point is the observation that:
\begin{enumerate}
\item This process cannot result in notions of fairness that do not have any simple, analytic description, and
\item This process also overlooks a more precursory problem: namely, \textit{who gets to define what is fair?}
\end{enumerate}  

It's unlikely that researchers alone are best fit for defining algorithmic fairness. Recent work identifies undue power imbalances \cite{paml} and biases \cite{kraemer2011ethicsalgo} that arise when algorithm designers and researchers are the only voices in conversations around ethical design. \citet{veale2018fairness} find that many machine learning practitioners are disconnected from the ``organisational and institutional realities, constraints and needs'' specific to the contexts in which their algorithms are applied. Researchers may not be able to propose a concise technical definition, e.g. statistical parity, to capture the nuances of fairness in any given context. Furthermore, many philosophers hold that \textit{stakeholders} who are affected by moral decisions and \textit{experts} who understand the context in which moral decisions are made will have the best judgment about which decisions are fair in that context \cite{wong2020democratizing, kraemer2011ethicsalgo}.

To this end, we aim to allow stakeholders and experts to play a central role in the process of defining algorithmic fairness. This is aligned with recent work on \textit{virtual democracy}, which propose and enact participatory methods to automate moral decision-making \cite{conitzer2018moral, noothigattu2018voting, kahng2019statistical, lee2019webuildai, freedman2020kidney}.

The way we involve stakeholders is motivated by two concerns:
\begin{enumerate}
 \item We want stakeholders to have free rein over how they may define fairness, e.g. we don't want to simply have them vote on whether existing, simple constraints like statistical parity or equalized odds is best; and 
\item We want non-technical stakeholders to be able to contribute, even if they may not understand the inner workings of a learning algorithm.
\end{enumerate}

We hold that people often cannot elucidate their conceptions of fairness; yet, they can identify specific scenarios where fairness or unfairness occurs.\footnote{This is philosophically akin to a theory of moral epistemology called \emph{moral perception}, which claims that we know moral facts (e.g. goodness or fairness) via perception, as opposed to knowing them via rules of morality (see \cite{blum1994moralperception}).} Drawing from individual notions of fairness like \citet{awareness,joseph2016fairness} that are defined in terms of pairwise comparisons, we therefore aim to elicit stakeholders conceptions of fairness by asking them to compare pairs of individuals in specific scenarios. Specifically, we ask whether it's fair that one particular individual should receive an outcome that is as desirable or better than the other.

When pointing out fairness or unfairness, this kind of pairwise ranking is natural. For example, after Serena Williams was penalized for a verbal interaction with an umpire in the 2018 U.S. Open Finals, tennis player James Blake tweeted, ``I have said worse and not gotten penalized. And I've also been given a `soft warning' by the ump where they tell you knock it off or I will have to give you a violation. [The umpire] should have at least given [Williams] that courtesy'' \cite{yaptangco2018tennis}. Here, Blake thinks that: 1) Williams should have been judged as or less severely than he would have been in a similar situation; and 2) the umpire's decision was unfair, because Williams was judged more severely.

Thus, we ask a set of stakeholders about a fixed set of pairs of individuals subject to a classification problem. For each pair of individuals $(A,B)$, we ask the stakeholder to choose from amongst a set of four options:

\begin{enumerate}
\item Fair outcomes must classify $A$ and $B$ the \emph{same} way (i.e. they must either both get a favorable classification or both get an unfavorable classification).
\item Fair outcomes must give $A$ an outcome that is equal to \emph{or preferable to} the outcome of $B$. 
\item Fair outcomes must give $B$ an outcome that is equal to \emph{or preferable to} the outcome of $A$
\item Fair outcomes may treat $A$ and $B$ differently without any constraints. 
\end{enumerate}
These constraints, a data distribution, and a hypothesis class define a learning problem: minimize classification error subject to the constraint that the rate of violation of the elicited pairwise constraints is held below some fixed threshold.
Crucially and intentionally we elicit relative pairwise orderings of outcomes (e.g. $A$ and $B$ should be treated equally), but do not elicit preferences for absolute outcomes (e.g. $A$ should receive a positive outcome). This is because \emph{fairness} --- in contrast to \emph{justice} --- is often conceptualized as a measure of equality of outcomes, rather than correctness of outcomes\footnote{Sidney Morgenbesser, following the Columbia University campus protests in the 1960s, reportedly said that the police had treated him unjustly, but not unfairly. He said that he was treated unjustly because the police hit him without provocation --- but not unfairly, because the police were doing the same to everyone else as well.}. In particular, it remains the job of the learning algorithm to optimize for correctness subject to elicited fairness constraints.

We remark that the premise (and the foundation for the enormous success) of machine learning is that accurate decision making rules in complex scenarios cannot be defined with simple analytic rules, and instead are best derived directly from data. Our work can be viewed similarly, as deriving fairness constraints from data elicited from experts and stakeholders.  In this paper, we solve the computational, statistical, and conceptual issues necessary to do this, and demonstrate the effectiveness of our approach via a small behavioral study.

\subsection{Results}

\paragraph{Our Model} We model individuals as having features in $\mathcal{X}$ and binary labels, drawn from some distribution $\mathcal{P}$.
A committee of \emph{stakeholders}\footnote{Though we develop our formalism as a committee of stakeholders, note that
it permits the special case of a single subjective stakeholder, which we make use of
in our behavioral study.}
$u \in \mathcal{U}$ has preferences about whether one individual should be judged better than another individual. \lsdelete{We imagine presenting each stakeholder with a set of pairs of individuals and asking them to choose one of four options for each pair, e.g. given the features of Alice and Bob:
\begin{enumerate}
\item No constraint;
\item Alice should be treated as well as Bob or better;
\item Bob should be treated as well as Alice or better; or
\item Alice and Bob should be treated similarly.
\end{enumerate}} 
\lsedit{We imagine presenting each stakeholder with a set of pairs of individuals and asking them to choose one of four options for each pair, e.g. given the features of Serena Williams and Jacob Blake:
\begin{enumerate}
\item No constraint;
\item Williams should be treated as well as Blake or better;
\item Blake should be treated as well as Williams or better; or
\item Williams and Blake should be treated similarly.
\end{enumerate}}

\lsedit{Here, when we refer to how an individual \textit{should be treated}, we mean the probability that an individual is given a positive label by the classifier. This may be a bit of a relaxation of these judgments, since they are not about actualized classifications, but rather the \textit{probabilities} of positive classification. For example, we may not consider it a violation of fairness preference (2) if Williams is judged worse than Blake in a specific scenario; yet, if an ump is \textit{more likely} to judge Williams worse than Blake in general, then this would violate this fairness preference.}

\lsedit{We represent these preferences abstractly as a set of ordered pairs $C_u \subseteq \mathcal{X}\times \mathcal{X}$ for each stakeholder $u$. If $(x,x') \in C_u$, this means that stakeholder $u$ believes that individual $x'$ must be treated as well as individual $x$ or better, i.e. ideally the classifier $h$ classifies such that $h(x') \ge h(x)$. This captures all possible responses above. For example, for Serena Williams $(s)$ and Jacob Blake $(b)$, if stakeholder $u$ responds: 
\begin{enumerate}
\item \textit{No constraint} $\Leftrightarrow\, (s,b)\not\in C_u$ nor $(b,s)\not\in C_u$;
\item \textit{Williams as well as Blake} $\Leftrightarrow\, (b,s) \in C_u$;
\item \textit{Blake as well as Williams} $\Leftrightarrow\, (s,b) \in C_u$; or
\item \textit{Treated similarly} $\Leftrightarrow\, (s,b) \in C_u$ and $(b,s) \in C_u$ (since if $h(b) \ge h(s)$ and $h(s) \ge h(b)$, then $h(s)=h(b)$).
\end{enumerate}}
%\cj{the sentences here are not complete?}

\lsdelete{Here, by treatment of an individual, we mean the probability that an individual is given a positive label by the classifier. We represent these preferences abstractly as a set of ordered pairs $C_u \subseteq \mathcal{X}\times \mathcal{X}$ for each stakeholder $u$. If $(x,x') \in C_u$, this means that stakeholder $u$ believes that $x'$ must be treated as well as $x$, meaning ideally, the output classifier $h$ is such that $h(x) \le h(x')$. Note that by including $(x,x')$ and $(x',x)$ in $C_u$, stakeholder $u$ can assert that $x$ and $x'$ should be treated similarly. }

We impose no structure on how stakeholders form their views  nor on the relationship between the views of different stakeholders --- i.e. the sets $\{C_u\}_{u \in \mathcal{U}}$ are allowed to be arbitrary (for example, they need not satisfy a triangle inequality), and need not be mutually consistent. We write $C = \cup_u C_u$.

\lsedit{We then formulate an optimization problem constrained by these pairwise fairness constraints. Since it is intractable to require that all constraints in $C$ be satisfied exactly, we formulate two different ``knobs'' with which we can quantitatively relax our fairness constraints.} \lsdelete{We then formulate a constrained optimization problem, that has two different ``knobs'' with which we can quantitatively relax our fairness constraint.} 

\lsedit{For $\gamma>0$ (our first knob), we say that the classification of an ordered pair of individuals $(x,x') \in C$ satisfies $\gamma$-fairness if the probability of positive classification for $x'$ plus $\gamma$ is no smaller than the probability of positive classification for $x$, i.e. $\E[h(x')]+\gamma \geq \E[h(x)]$. In this expression, the expectation is taken only over the randomness of the classifier $h$. Equivalently, a $\gamma$-fairness violation corresponds to the classification of an ordered pair of individuals $(x,x') \in C$ if the difference between these probabilities of positive classification is greater than $\gamma$, i.e. $\E[h(x) - h(x')] > \gamma$. Thus, $\gamma$ acts as a buffer on how likely it is that $x'$ be classified worse than $x$ before a fairness violation occurs. For example, if Blake $(b)$ receives a good label (i.e. no penalty) 80\% of the time and Williams $(s)$ 50\% of the time, then for $\gamma=0.1$ this constitutes a $\gamma$-fairness violation for the ordered pair $(b,s) \in C$, since $\E[h(b) - h(s)] = 0.3 \geq 0.1 = \gamma$.}

\lsdelete{We say that a $\gamma$-fairness violation corresponds to the classification of an ordered pair of individuals $(x, x') \in C$ if the difference between the probabilities of positive classification for $x'$ and $x$ is greater than $\gamma>0$, i.e. $\E[h(x') - h(x)] \geq \gamma$ or, equivalently, $\E[h(x)] +\gamma \geq E[h(x')$. In this expression, the expectation is taken only over the randomness of the classifier $h$.}
 
 We might ask that for \emph{no} pair of individuals do we have a $\gamma$-fairness violation:  $\max_{(x,x') \in C} \E[h(x) - h(x')] \leq \gamma$. On the other hand, we could ask for the weaker constraint that \emph{over a random draw of a pair of individuals}, the expected fairness violation is at most $\eta$ (our second knob): $\E_{(x,x') \sim \mathcal{P}^2}[ (h(x)-h(x'))  \cdot\mathbf{1}[(x,x') \in C]] \leq \eta$. We can also combine both relaxations to ask that the in expectation over random pairs, the ``excess'' fairness violation, on top of an allowed budget of $\gamma$, is at most $\eta$. \lsedit{For example, as above, if Blake receives a good label 80\% of the time and Williams 50\%, for $\gamma=0.1,$ the umpire classifier would pick up $0.2$ excess fairness violation for $(b,s)\in C$. In Section~\ref{sec:problem-formulation}, we weight these excess fairness violations by the proportion of stakeholders who agree with the corresponding fairness constraint and mandate their sum be less than $\eta$.} Subject to these constraints, we would like to find the distribution over classifiers that minimizes classification error: given a setting of the parameters $\gamma$ and $\eta$, this defines a benchmark with which we would like to compete.

\paragraph{Our Theoretical Results}
Even absent fairness constraints, learning to minimize 0/1 loss (even over linear classifiers) is computationally hard in the worst case (see e.g. \citealp{agnostic1,agnostic2}). Despite this, learning seems to be empirically tractable in the real world. To capture the \emph{additional} hardness of learning subject to fairness constraints, we follow several recent papers \cite{reductions,KNRW18} in aiming to develop \emph{oracle efficient} learning algorithms. Oracle efficient algorithms are assumed to have access to an \emph{oracle} (realized in experiments using a heuristic --- see the next section) that can solve weighted classification problems. Given access to such an oracle, oracle efficient algorithms must run in polynomial time.  We show that our fairness constrained learning problem is computationally no harder than unconstrained learning by giving such an oracle efficient algorithm (or reduction), and show moreover that its guarantees generalize from in-sample to out-of-sample in the usual way --- with respect to both accuracy and the frequency and magnitude of fairness violations. Our algorithm is simple and amenable to implementation, and we use it in our experimental results.

\paragraph{Our Experimental Results}
We implement our algorithm and run a set of experiments on the COMPAS recidivism prediction dataset, using fairness constraints elicited from 43 human subjects. We establish that our algorithm converges quickly (even when implemented with fast learning heuristics, rather than ``oracles''). We also explore the Pareto curves trading off error and fairness violations for different human subjects, and find empirically that there is a great deal of variability across subjects in terms of their conception of fairness, and in terms of the degree to which their expressed preferences are in conflict with accurate prediction. We find that most of the difficulty in
balancing accuracy with the elicited fairness constraints can be attributed to a small fraction of the constraints.

\subsection{Related Work}
Our work is related to existing notions of \emph{individual fairness} like \cite{awareness,joseph2016fairness} that conceptualize fairness as a set of constraints binding on pairs of individuals. In particular the notion of metric fairness proposed in \cite{awareness} is closely related, but distinct from the fairness notions we elicit in this work. In particular: 1) We allow for constraints that require that individual $A$ be treated better than or equal to individual $B$, whereas metric fairness constraints are symmetric, and only allow constraints of the form that $A$ and $B$ be treated similarly. In this sense our notion is more general.  2) We elicit binary judgements between pairs of individuals, whereas metric fairness is defined as a Lipschitz constraint on a real valued metric. In this sense our notion is more restrictive, although (we believe) easier to elicit. 

The most technically related piece of work is  \citet{RY18}, who prove similar generalization guarantees to ours for a  relaxation of metric fairness: our definition is slightly more general, and our generalization guarantee somewhat tighter, but technically the results are closely related.
Our conceptual focus and main results are quite different, however: for general learning problems, they prove worst-case hardness results, whereas we derive practical algorithms in the oracle-efficient model, and empirically
evaluate them on real user data.  The concurrent work of \citet{fairrep} makes a similar observation about guaranteeing fairness with respect to an unknown metric, although their aim is the orthogonal goal of fair representation learning.

\citet{awareness} first proposed the notion of individual metric-fairness that we take inspiration from, imagining fairness as a Lipschitz constraint on a randomized algorithm, with respect to some ``task-specific metric''. Since the original proposal, the question of where the metric should come from has been one of the primary obstacles to its adoption, and the focus of subsequent work. \citet{zemel} attempt to automatically learn a representation for the data (and hence, implicitly, a similarity metric) that causes a classifier to label an equal proportion of two protected groups as positive. \citet{compawareness} consider a group-fairness like relaxation of individual metric-fairness, asking that on average, individuals in pre-specified groups are classified with probabilities proportional to the average distance between individuals in those groups. They show how to learn such classifiers given access to an oracle which can evaluate the distance between two individuals according to the metric. Compared to our work, they assume the existence of a fairness metric which can be accessed using a quantitative oracle, and they use this metric to define a statistical rather than individual notion of fairness.  \citet{unknown} assumes access to an oracle which simply identifies fairness violations across pairs of individuals. Under the assumption that the oracle is exactly consistent with a metric in a simple linear class, \citet{unknown} gives a polynomial time algorithm to compete with the best fair policy in an online linear contextual bandits problem. In contrast to \citet{unknown}, we make essentially no assumptions at all on the structure of the ``fairness'' constraints.  \citet{I2019} studies the problem of metric learning with the goal of using only a small number of numeric valued queries, which are hard for human beings to answer, relying more on comparison queries. In contrast with \citet{I2019}, we do not attempt to learn a metric, and instead  directly learn a classifier consistent with the elicited pairwise fairness constraints.

\section{Problem Formulation}\label{sec:problem-formulation}
Let $S$ denote a set of labeled examples
$\{z_i = (x_i, y_i)\}_{i=1}^n,$ where $x_i\in \mathcal{X}$ is a
feature vector and $y_i\in\mathcal{Y}$ is a label. We will also write $S_X = \{x_i\}_{i=1}^n$
and $S_Y = \{y_i\}_{i=1}^n$. Throughout the paper, we will restrict attention to binary labels, so let $\mathcal{Y} = \{0,1\}$. Let $\mathcal{P}$ denote the unknown distribution over $\mathcal{X} \times \mathcal{Y}$. Let
$\mathcal{H}$ denote a hypothesis class containing binary classifiers $h:\mathcal{X} \to \mathcal{Y}$. We
assume that $\mathcal{H}$ contains a constant classifier (which
will imply that the ``fairness constrained'' ERM problem that we define is always feasible). We'll denote classification error of hypothesis $h$ by $\err(h, \mathcal{P}) := \Pr_{(x,y) \sim \mathcal{P}}(h(x) \neq y)$ and its empirical classification error by $\err(h, S) := \frac{1}{n}\sum_{i=1}^n \ind(h(x_i) \neq y_i)$.

We assume there is a set of one or more stakeholders $\mathcal{U}$, such that each stakeholder $u \in \mathcal{U}$ is identified with a set of
\lsedit{ordered} pairs $(x,x')$ of individuals $C_u \subseteq \mathcal{X}^2$: for each $(x,x') \in C_u$, stakeholder $u$ thinks that $x'$ should be treated as well as $x$ or better, i.e. ideally that for the learned classifier $h$, the classification $h(x') \geq h(x)$ (we will ask that this hold in expectation if the classifier is randomized, and will relax it in various ways).
For each ordered pair $(x, x')$, let $w_{x,x'}$ be the fraction of stakeholders who would like individual $x$ to be treated as well as $x'$: that is, $w_{x,x'} = \frac{|\{u | (x,x') \in C_u\}|}{|\mathcal{U}|}$. Note that if $(x,x') \in C_u$ and $(x',x) \in C_u$, then the stakeholder wants $x$ and $x'$ to be treated similarly in that ideally $h(x) = h(x')$.

In practice, we will not have direct access to the sets of \lsedit{ordered} pairs $C_u$ corresponding to the stakeholders $u$, but we may ask them whether particular \lsedit{ordered} pairs are in this set (see Section \ref{sec:experiment} for details about how we actually query human subjects). We model this by imagining that we present each stakeholder with a random set of pairs $A \subseteq [n]^2$, and for each \lsedit{ordered} pair $(x_i,x_j)$, ask if $x_j$ should not be treated worse than $x_i$; we learn the set of \lsedit{ordered} pairs in $A \cap C_u$ for each $u$. Define the empirical constraint set $\hat{C}_u = \{(x_i,x_j) \in C_u \}_{\forall (i,j) \in A}$ and $\hat{w}_{x_i x_j} = \frac{|\{u | (x,x') \in \hat{C}_u\}|}{|\mathcal{U}|}$, if $(i,j) \in A$ and 0 otherwise. We write that $\hat{C}=\cup_u\hat{C}_u$. For brevity, we will sometimes write $w_{ij}$ instead of $w_{x_i,x_j}$. Note that $\hat{w}_{ij} = w_{ij}$ for every $(i, j) \in A$.

Our goal will be to find the distribution over classifiers from
$\mathcal{H}$ that minimizes classification error, while satisfying the stakeholders' fairness preferences, captured by the constraints $C$. To do so, we'll try to find $D$, a probability distribution over $\mathcal{H}$, that minimizes the training error and satisfies the stakeholders' empirical fairness constraints, $\hat{C}$. For convenience, we denote the expected classification error of $D$ as $err(D, \cP) := \E_{h \sim D}[err(h,\cP)]$ and likewise its expected empirical classification error as $err(D, S) := \E_{h \sim D}[err(h,S)]$. We say that any distribution $D$ over classifiers satisfies $(\gamma, \eta)$-approximate subjective fairness if it is a feasible solution to the following constrained empirical risk minimization problem:
\begin{align}
  \label{min} \min_{D\in\Delta\mathcal{H}, \alpha_{ij} \geq 0} &err(D, S)\\
  \label{abs}  \text{such that } \forall(i,j)\in{[n]^2}: \quad &\E_{h\sim D}\left[h(x_i) - h(x_j)\right] \leq \alpha_{ij} + \gamma\\
  \label{sum}                                                   &\sum_{(i,j) \in [n]^2} \frac{\hat{w}_{ij} \alpha_{ij}}{|A|}  \leq \eta.
\end{align}
%\ar{Now that the $\eta$ constraint is normalized by $|A|$, aren't the weights $\hat{w}_{i,j}$ incorrectly scaled?}
%\cj{Yeah, the normalization constant should be $\sum_{(i,j) \in A} \hat{w}_{ij}$}\ar{No it shouldn't? I think the normalization constant should be 1. e.g. the problem is still well defined even if the stakeholders don't care about any pairs. Removing normalization for now.}

This ``Fair ERM'' problem, whose feasible region we denote by $\Omega(S, \hat{w}, \gamma, \eta)$, has decision variables $D$ and $\{\alpha_{ij}\}$, representing the distribution over classifiers and the ``fairness violation'' terms for each pair of training points, respectively. The parameters $\gamma$ and $\eta$ are constants which represent the two different ``knobs'' we have at our disposal to quantitatively relax the fairness constraint, in an $\ell_\infty$ and $\ell_1$ sense, respectively. 

\lsedit{
The parameter $\gamma$ defines, for any ordered pair $(x_i,x_j)$, the maximum difference between the probabilities that $x_i$ and $x_j$ receive positive labels without constituting a fairness violation. The parameter $\alpha_{ij}$ captures the ``excess fairness violation'' beyond $\gamma$ for $(x_i,x_j)$. The parameter $\eta$ upper bounds the sum of these allotted excess fairness violation terms $\alpha_{ij}$, each weighted by the proportion of judges who perceive they ought to be treated similarly $\hat{w}_{ij}$ and normalized with the total number of pairs presented $|A|$. Thus, $\eta$ bounds the expected degree of dissatisfaction of the panel of stakeholders $\mathcal{U}$, over the random choice of an ordered pair $(x_i,x_j) \in A$ and the randomness of their classification. We iterate over all $(i,j) \in [n]^2$ (not just those in $\hat{C}$) because $\hat{w}_{ij}=0$ if no judge prefers $x_i$ should be classified as well as $x_j$.}

\lsedit{To better understand $\gamma$ and $\eta$, we consider them in isolation. First, suppose we set $\gamma=0$. Then, \textit{any} difference in probabilities of positive classification between pairs is deemed a fairness violation. So, if we choose $(D,\{\alpha_{ij}\})$ such that the sum of weighted differences in positive classification probabilities exceeds $\eta$, i.e.\small\vspace{-.1cm} \[\sum_{(i,j)\in[n]^2} \frac{\hat{w}_{ij}\E_{h\sim\,D}[h(x_i)-h(x_j)]}{|A|}>\eta,\]
\normalsize then this is an infeasible solution. For example, 50\% of stakeholders think that Serena Williams $(s)$ should be treated as well as James Blake $(b)$, 70\% of stakeholders think Williams should be treated as well as John McEnroe (m), and no other constraints ($|A| = 6$); if Williams receives a good label 50\% of the time, Blake 80\%, McEnroe 90\%, and $\eta=0.07$, this is an $\eta$-fairness violation, since \small\vspace{-.1cm} 
\begin{align*}
	&\left(\hat{w}_{bs}\E[h(b)-h(s)]+\hat{w}_{ms}\E[h(m)-h(s)]\right)/|A|\\ &= \left(0.5(0.8 - 0.5) + 0.7(0.9-0.5)\right)/6 \approx 0.071 >0.07 = \eta.
\end{align*} \normalsize Second, suppose that $\eta=0$. Then, for any $(x_i,x_j)\in C$ (for which $\hat{w}_{ij}>0$), if the expected difference in labels exceeds $\gamma$, i.e. $\E_{h\sim\,D}[h(x_i)-h(x_j)]>\gamma$, then this is an infeasible solution.}

\lsdelete{To understand each of them, it helps to consider them in isolation. First, imagine that we set $\eta = 0$. $\gamma$ controls the worst-case disparity between the probability that any pair $(x_i,x_j) \in \hat C$ is classified as positive. (Note that although we have constraints for \emph{every} pair $x_i,x_j$, not just those in $\hat C$, because $\hat{w}_{i,j} = 0$ if $(x_i,x_j) \not\in \hat{C}$, a solution to the above program is free to set the slack parameter $\alpha_{i,j} = 1$ for any such pair. When $\eta = 0$, the slack parameter $\alpha_{i,j}$ is constrained to be $0$ whenever $\hat{w}_{i,j} > 0$ --- i.e. whenever $(x_i,x_j) \in \hat C$.) Next imagine that $\gamma = 0$. The parameter $\eta$ controls the \emph{expected} difference in probability that a randomly selected ordered pair $(x_i,x_j) \in A$ is classified positively, weighted by the number of stakeholders $u$ who believe that $x_i$ should be treated as well as $x_j$ --- i.e. the expected degree of dissatisfaction of the panel of stakeholders $\mathcal{U}$, over the random choice of a pair of individuals and the randomness of their classification.}
%\footnote{To see this, recall that $(x_i,x_j) \in C \Rightarrow (x_j,x_i) \in C$, and so constraint \ref{abs} can be rewritten as $|\E_{h\sim D}[h(x_i) - h(x_j)]| \leq \alpha_{ij} + \gamma$, and $\hat{w}_{i,j} = 0$ if $(i,j) \not\in A$, and so the sum in constraint \ref{sum} can equivalently be taken over $A$ rather than $[n]^2$.}.

%\ar{Are these next clarifications needed?}
%\cj{I agree. I think we can take them out}
 %The above formulation asks that the weighted average of fairness violation on each pair (with a little bit of slack $\gamma$) can't be more than $\eta$.

% Observe that by symmetry the constraint in \eqref{abs} can also be written as
%\[
%  \forall(i,j)\in{[n]^2}: \quad \left|\E_{h\sim D}[h(x_i) -
%    h(x_j)]\right| \leq \alpha_{ij} + \gamma.
%\]
%Note also that the constraint in \eqref{sum} is equivalent to
%\[
%\frac{\sum_{(i,j) \in A} w_{ij} \alpha_{ij}}{|A|} \le \eta
%\] because for $(i,j) \notin A$, $\hat{w}_{ij}=0$.

%Lastly, if we set $\eta = 0$, then the set constraints is equivalent to\cj{I think it's worth describing the difference between $\eta=0$ and $\gamma=0$ more in detail}
%\[
%  \forall(i,j)\in{[n]^2} \mbox{ such that } w_{ij} > 0: \quad
%  \E_{h\sim D}[h(x_i) - h(x_j)] \leq \gamma.
%\]

\subsection{Fairness Loss}
Our goal is to develop an algorithm that will minimize its empirical error $err(D, S)$, while satisfying the empirical fairness constraints $\hat{C}$. The standard VC dimension argument states that empirical classification error will concentrate around the true classification error: we hope to show the same kind of generalization for fairness as well. To do so, we first define fairness loss with respect to our elicited fairness preferences here.

For some fixed randomized hypothesis $D \in \Delta\mathcal{H}$ and $w$, define $\gamma$-fairness loss between an ordered pair as
\[
\Pi_{D,w,\gamma}\left(\left(x,x'\right)\right) = w_{x, x'} \max\left(0, \E_{h \sim D}\left[h(x) - h(x')\right]- \gamma \right)
\]
For a set of pairs $M \subset \mathcal{X} \times \mathcal{X}$, the $\gamma$-fairness loss of $M$ is defined to be:
\[
\Pi_{D,w,\gamma}(M) = \frac{1}{|M|}\sum_{(x,x') \in M} \Pi_{D,w,\gamma}\left(\left(x,x'\right)\right)
\]
This is the expected degree to which the difference in classification probability for a randomly selected pair exceeds the allowable budget $\gamma$, weighted by the fraction of stakeholders who think that $x'$ should be treated as well as $x$. By construction, the empirical fairness loss is bounded by $\eta$ (i.e. $\Pi_{D,w,\gamma}(M) \le \sum_{ij} \frac{\hat{w}_{ij} \alpha_{ij}}{|A|} \le \eta$), and we show in Section \ref{sec:generalization},
the empirical fairness should concentrate around the true fairness loss $\Pi_{D,w,\gamma}(\cP) := \E_{x,x' \sim \cP^2}\left[\Pi_{D,w,\gamma}(x,x')\right]$.

\subsection{Cost-sensitive Classification}
In our algorithm, we will make use of a cost-sensitive classification (CSC) oracle. An instance of CSC problem can be described by a set of costs $\{(x_i, c^0_i, c^1_i)\}_{i=1}^n$ and a hypothesis class, $\mathcal{H}$. Costs $c^0_i$ and $c^1_i$ correspond to the cost of labeling $x_i$ as 0 and 1 respectively. Invoking a CSC oracle on $\{(x_i, c^0_i, c^1_i)\}_{i=1}^n$ returns a hypothesis $h^*$ such that $h^* \in \argmin_{h \in \mathcal{H}} \sum_{i=1}^n \left(h(x_i)c_i^1 + \left(1-h(x_i)\right)c_i^0\right)$. We say that an algorithm is \emph{oracle-efficient} if it runs in polynomial time assuming access to a CSC oracle.

\section{Empirical Risk Minimization}\label{sec:erm}
In this section, we give an oracle-efficient algorithm \ref{alg:primal-best-response} for approximately solving our (in-sample) constrained empirical risk minimization problem. Details are deferred to the supplement. We prove the following theorem:

\begin{theorem}\label{thm:algorithm-guarantee}
Fix parameters $\nu, C_{\tau}, C_\lambda$ that serve to trade off running time with approximation error. There is an efficient algorithm that makes $T=\left(\frac{2C_{\lambda}\sqrt{\log(n)} +C_{\tau}}{\nu}\right)^2$ CSC oracle calls and outputs a solution $(\hat{D}, \hat{\alpha})$ with the following guarantee. The objective value is approximately optimal:
\[
err(\hat{D}, S) \le \min_{(D,\alpha) \in \Omega(S, \hat{w}, \gamma, \eta)} \err(D, S) + 2\nu.
\]
 And the constraints are approximately satisfied: $\E_{h\sim \hat{D}}[h(x_i) - h(x_j)] \le \hat{\alpha}_{ij} + \gamma + \frac{1+2\nu}{C_\lambda}, \forall (i,j) \in [n]^2$ and
$\frac{1}{|A|}\sum_{(i, j)\in [n]^2} \hat{w}_{ij} \hat{\alpha}_{ij}  \le \eta + \frac{1+2\nu}{C_\tau}.$ 
\end{theorem}
%In this section, we provide our result that says the in-sample constrained optimization can be solved approximately (i.e. the objective $\err(D,S)$ will be close to optimal and the constraints will be violated only by a little) and provide a sketch of our solution.

\subsection{Outline of the Solution}
We frame the problem of solving our constrained ERM problem (equations \eqref{min} through \eqref{sum}) as finding an approximate equilibrium of a zero-sum game between a primal player and a dual player, trying to minimize and maximize respectively the Lagrangian of the constrained optimization problem.

The Lagrangian for our optimization problem is
\begin{align*}
  \Lagr(D, \alpha, \lambda, \tau) = \err(D,S) &+\sum_{(i,j) \in [n]^2} \lambda_{ij} \left( \E_{h\sim D}[h(x_i) - h(x_j)] - \alpha_{ij} -\gamma \right) \\
  &+\tau \left( \frac{1}{|A|}\sum_{(i, j)\in [n]^2} w_{ij} \alpha_{ij} - \eta \right)
\end{align*}

For the constraint in equation \eqref{abs}, corresponding to the $\gamma$-fairness violation for each ordered pair of individuals $(x_i,x_j)$, we introduce a dual variable $\lambda_{ij}$. For the constraint \eqref{sum}, which corresponds to the $\eta$-fairness violation over all pairs of individuals, we introduce a dual variable of $\tau$. \lsedit{For brevity, we define vectors $\lambda \in \Lambda$ and $\alpha$ which are made up of all the multipliers $\lambda_{ij}$ and the excess fairness violation allotments $\alpha_{ij}$, respectively.} The primary player's action space is $(D, \alpha) \in (\Delta\mathcal{H}, [0,1]^{n^2})$, and the dual player's action space is $(\lambda, \tau) \in (\cR^{n^2}, \cR)$.%, which corresponds to how much he wants to charge each constraint for the violation.

Solving our constrained ERM problem is equivalent to finding a minmax equilibrium of $\Lagr$:
\begin{align*}
&\argmin_{(D,\alpha) \in \Omega(S, \hat{w}, \gamma, \eta)} \err(D, S) = \argmin_{D \in \Delta\mathcal{H}, \alpha \in [0,1]^{n^2}} \max_{\lambda \in \cR^{n^2}, \tau \in \cR} \Lagr(D, \alpha, \lambda, \tau)
\end{align*}

Because $\Lagr$ is linear in terms of its parameters, Sion's minimax theorem \citep{sion1958general} gives us
\begin{align*}
\hspace{-.7cm}\min_{D \in \Delta\mathcal{H}, \alpha \in [0,1]^{n^2}} \max_{\lambda \in \cR^{n^2}, \tau \in \cR} \hspace{-.3cm} \Lagr(D, \alpha, \lambda, \tau) 
=   \hspace{-.2cm} \max_{\lambda \in \cR^{n^2}, \tau \in \cR} \min_{D \in \Delta\mathcal{H}, \alpha \in [0,1]^{n^2}} \hspace{-.4cm} \Lagr(D, \alpha, \lambda, \tau).
\end{align*}

By a classic result of \citet{freund1996game}, one can compute an approximate equilibrium by simulating ``no-regret'' dynamics between the primal and dual player. \lsedit{``No-regret'' meaning that the average \textit{regret} --or difference between our algorithm's plays and the single best play in hindsight-- is bounded above by a term that converges to zero with increasing rounds.}

\lsedit{In our case, we define a zero-sum game wherein the primary player's plays from action space $(D, \alpha) \in (\Delta\mathcal{H}, [0,1]^{n^2})$, and the dual player's plays from action space $(\lambda, \tau) \in (\cR_{\geq0}^{n^2}, \cR_{\geq0})$. In any given round $t$, the dual player plays first and the primal second. The primal player can simply best respond to the dual player (see Algorithm~\ref{alg:primal-best-response}).}

\lsedit{However, since the dual player plays first, they cannot simply best respond to the primal player's action. The dual player has to anticipate the primal player's best response in order to figure out what to play. Ideally, the dual player would enumerate every possible primal play and calculate the best dual response. However, this is intractable. So, the dual player updates dual variables $\{\lambda,\tau\}$ according to \textit{no-regret} learning algorithms (exponentiated gradient descent \cite{kivinenwarmuth1997exponentiated} and online gradient descent \cite{zinkevich2003online}, respectively).}

\lsedit{The time-averaged play of both players converges to an approximate equilibrium of the zero-sum game, where the approximation is controlled by the regret of the dual player. This approximate equilibrium corresponds to an approximate saddle point for the Lagrangian $\Lagr$, which is equivalent to an approximate solution to the Fair ERM problem.}

\lsedit{We organize the rest of this section as follows. First, for simplicity, we show how the primal player updates $\{D,\alpha\}$ (even though the dual player plays first). Second, we show how the dual player updates $\{\lambda,\tau\}$. Finally, we prove that these updates are no-regret and relate the regret of the dual player to the approximation of the solution to the Fair ERM problem.}

\lsdelete{Our algorithm can be viewed as simulating the following \textit{no-regret dynamics} between the primal and the dual players over $T$ rounds. Over each of the rounds, the dual player updates dual variables $\{ \lambda, \tau \}$ according to \textit{no-regret} learning algorithms (exponentiated gradient descent \citealp{kivinenwarmuth1997exponentiated} and online gradient descent \citealp{zinkevich2003online} respectively). At every round, the primal player then best responds with a pair $\{ D, \alpha \}$ using a CSC oracle. The time-averaged play of both players converges to an approximate equilibrium of the zero-sum game, where the approximation is controlled by the regret of the dual player. \lsedit{This approximate equilibrium corresponds to an approximate saddle point for the Lagrangian $\Lagr$, which is equivalent to an approximate solution to the Fair ERM problem.}}

\subsection{The Primal Player's Best Response}\label{subsec:primal_best_response}

In each round $t$, given the actions chosen by the dual player $(\lambda^t, \tau^t)$, the primal player needs to best respond by choosing $(D^t,\alpha^t)$ such that $(D^t,\alpha^t) \in \argmin_{D \in \Delta\mathcal{H}, \alpha \in [0,1]^{n^2}} \Lagr(D, \alpha, \lambda^t, \tau^t).$ \lsedit{In Lemma~\ref{lem:primal_separate}, we separate the optimization problem into two: one optimization over hypothesis $D$ and one over violation factor $\alpha$. In Lemma~\ref{lem:primal_D}, the primal player updates the hypothesis $D$ by leveraging a CSC oracle.}
\lsdelete{We do so by leveraging a CSC oracle.} Given $\lambda^t$, we can set the costs as follows
\begin{align*}
&c_i^{0} =  \frac{1}{n} \mathbb{E}_{h\sim D}\left[\ind(y_i \neq 0)\right] \quad
&c_i^{1} =  \frac{1}{n} \mathbb{E}_{h\sim D} \left[\ind\left(y_i \neq 1\right)\right] + (\lambda^t_{ij} - \lambda^t_{ji}).
\end{align*}

Then, $D^t = h^t = CSC\left(\{(x_i, c^0_i, c^1_i)\}_{i=1}^n\right)$ (we note that the best response is always a deterministic classifier $h^t$). 

As for $\alpha^t$, we show in Lemma~\ref{lem:primal_alpha} that the primal player sets $\alpha^t_{ij}=1$ if $\tau^t \frac{w_{ij}}{|A|} - \lambda^t_{ij} \le 0$ and 0 otherwise. We provide the pseudo-code in Algorithm~\ref{alg:primal-best-response}.

\begin{algorithm}
\caption{Best Response, $BEST_{\rho}(\lambda, \tau)$, for the primal player}
\label{alg:primal-best-response}
\begin{algorithmic}[ht!]
	\State \textbf{Input:} training examples $S=\{ x_i, y_i \}_{i=1}^n,$ $\lambda \in \Lambda$, $\tau \in \Tau$, CSC oracle $CSC$
    	\For{$i = 1, \dots, n$} 
    		\If{$y_i = 0$}
			\State Set $c^0_{i}=0$
			\State Set $c^1_{i}=\frac{1}{n} + \sum_{j \neq i} \lambda_{ij} - \lambda_{ji}$ 
		\Else
			\State Set $c^0_{i}=\frac{1}{n}$
			\State Set $c^1_{i}=\sum_{j \neq i} \lambda_{ij} - \lambda_{ji}$
		\EndIf
	\EndFor
	\State $D = CSC(S, c)$
	\For{$(i,j) \in [n]^2$}
		\State $\alpha_{ij} = \begin{cases} 
      1 : & \tau \frac{w_{ij}}{|A|} - \lambda_{ij} \leq 0 \\
      0 : & \tau \frac{w_{ij}}{|A|} - \lambda_{ij} > 0.
   \end{cases}
$
	\EndFor
	\State \textbf{Output:} $D, \alpha$
\end{algorithmic}
\end{algorithm}

\begin{lemma}
\label{lem:primal_separate} 
    For fixed $\lambda, \tau$, the best response optimization for the primal player is separable, i.e.
    \[\argmin_{D,\alpha} \Lagr(D, \alpha, \lambda, \tau) = \argmin_{D} \Lagr_{\lambda, \tau}^{\rho_1}(D) \times \argmin_{\alpha} \Lagr_{\lambda, \tau}^{\rho_2}(\alpha),\]
    
    where 
    \[\Lagr_{\lambda, \tau}^{\rho_1}(D) = \err(h,D) + \sum_{(i,j) \in [n]^2} \lambda_{ij} \E\limits_{h\sim D} \left[h(x_i) - h(x_j)\right]\] and 
    \[\Lagr_{\lambda, \tau}^{\rho_2}(\alpha) = \sum_{(i,j) \in [n]^2} \lambda_{ij} \left(-\alpha_{ij}\right) + \tau \left( \frac{1}{|A|}\sum_{(i,j) \in [n]^2} w_{ij}\alpha_{ij} \right)\]

\end{lemma}

\begin{lemma}\label{lem:primal_alpha} 
For fixed $\lambda$ and $\tau$, the output $\alpha$ from $BEST_{\rho}(\lambda, \tau)$ minimizes $\Lagr_{\lambda, \tau}^{\rho_2}$
\end{lemma}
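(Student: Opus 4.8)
The plan is to observe that $\Lagr_{\lambda,\tau}^{\rho_2}$ is a linear function of $\alpha$ over the box $[0,1]^{n^2}$, and that it decomposes as a sum of terms each involving a single coordinate $\alpha_{ij}$. Concretely, I would first regroup the two sums in the definition of $\Lagr_{\lambda,\tau}^{\rho_2}$ by the coordinate they multiply, writing
\[
\Lagr_{\lambda,\tau}^{\rho_2}(\alpha) \;=\; \sum_{(i,j)\in[n]^2} \lambda_{ij}(-\alpha_{ij}) + \tau\,\frac{1}{|A|}\sum_{(i,j)\in[n]^2} w_{ij}\alpha_{ij} \;=\; \sum_{(i,j)\in[n]^2} \alpha_{ij}\left(\tau\,\frac{w_{ij}}{|A|} - \lambda_{ij}\right).
\]

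The key step is then that since the feasible set $[0,1]^{n^2}$ is a product $\prod_{(i,j)}[0,1]$ and the objective is a sum of functions each depending on only one coordinate, minimizing over $\alpha$ is equivalent to minimizing each summand $g_{ij}(t) := t\left(\tau w_{ij}/|A| - \lambda_{ij}\right)$ over $t\in[0,1]$ independently. Each $g_{ij}$ is affine in $t$, so its minimum over $[0,1]$ is attained at an endpoint: at $t=1$ when the coefficient $\tau w_{ij}/|A| - \lambda_{ij} \le 0$, and at $t=0$ when $\tau w_{ij}/|A| - \lambda_{ij} > 0$ (with the case of a zero coefficient being a tie, so that the choice $t=1$ prescribed by the algorithm is still a valid minimizer). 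This is exactly the rule used in $BEST_\rho(\lambda,\tau)$ to set $\alpha_{ij}$, so the output $\alpha$ attains the coordinatewise minimum of every summand and hence minimizes $\Lagr_{\lambda,\tau}^{\rho_2}$.

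I do not anticipate any real obstacle here: the only point needing a word of care is the boundary case where $\tau w_{ij}/|A| - \lambda_{ij} = 0$, where the minimizer is non-unique and one must note the algorithm's tie-breaking is consistent with optimality; otherwise the argument is a direct consequence of linearity and separability over a box, and could even be folded into the proof of Lemma~\ref{lem:primal_separate}.
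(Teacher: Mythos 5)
Your proposal is correct and follows essentially the same route as the paper: rewrite $\Lagr_{\lambda,\tau}^{\rho_2}$ as $\sum_{(i,j)} \alpha_{ij}\bigl(\tau w_{ij}/|A| - \lambda_{ij}\bigr)$ and minimize each affine coordinate term at an endpoint of $[0,1]$, matching the rule in $BEST_\rho$. Your extra remark about the tie-breaking when the coefficient is zero is a small point of care the paper glosses over, but the argument is otherwise identical.
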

\begin{proof}
    The optimization 

    \begin{align*}
      \argmin_{\alpha} \Lagr_{\lambda, \tau}^{\rho_2} 
      &= \argmin_{\alpha} \sum_{(i,j) \in [n]^2} \lambda_{ij} \left(-\alpha_{ij}\right) + \tau \left( \frac{1}{|A|}\sum_{(i,j) \in [n]^2} w_{ij}\alpha_{ij} \right)\\
      &= \argmin_{\alpha} \ \sum_{(i,j) \in [n]^2} -\lambda_{ij} \alpha_{ij} 
      + \sum_{(i,j) \in [n]^2} \tau \frac{w_{ij}}{|A|} \alpha_{ij}\\
      &= \argmin_{\alpha} \ \sum_{(i,j) \in [n]^2} \alpha_{ij} \left(\tau \frac{w_{ij}}{|A|} -\lambda_{ij}\right).
    \end{align*}

    Note that for any pair $(i,j) \in [n]^2$, the term $\alpha_{ij} \in [0,1]$. Thus, when the constant $\tau \frac{w_{ij}}{|A|} -\lambda_{ij} \leq 0,$ we assign $\alpha_{ij}$ as the maximum bound, $1$, in order to minimize $\Lagr_{\rho_2}$. Otherwise, when $\tau \frac{w_{ij}}{|A|} -\lambda_{ij} > 0,$ we assign $\alpha_{ij}$ as the minimum bound, 0.
\end{proof}

\begin{lemma}\label{lem:primal_D}
For fixed $\lambda$ and $\tau$, the output $D$ from $BEST_{\rho}(\lambda, \tau)$ minimizes $\Lagr_{\lambda, \tau}^{\rho_1}$
\end{lemma}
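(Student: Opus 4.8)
The plan is to reduce the minimization of $\Lagr_{\lambda,\tau}^{\rho_1}$ over $\Delta\mathcal{H}$ to an instance of cost-sensitive classification whose objective coincides, term by term, with $\Lagr_{\lambda,\tau}^{\rho_1}$ evaluated at a deterministic classifier, so that the oracle's output is a minimizer essentially by definition. The one preliminary observation is that $\Lagr_{\lambda,\tau}^{\rho_1}(D) = \err(D,S) + \sum_{(i,j)\in[n]^2}\lambda_{ij}\,\E_{h\sim D}[h(x_i)-h(x_j)]$ is affine in $D$, since both $\err(\cdot,S)$ and each $\E_{h\sim D}[h(x_i)-h(x_j)]$ are linear in the distribution $D$; hence the minimum over the simplex $\Delta\mathcal{H}$ is attained at a point mass on some $h\in\mathcal{H}$, and it suffices to minimize over deterministic classifiers.

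First I would expand the empirical error of a deterministic $h$, using $h(x_i)\in\{0,1\}$:
\[
\err(h,S) = \frac1n\sum_{i=1}^n \ind(h(x_i)\neq y_i) = \frac1n\sum_{i=1}^n\Big(h(x_i)\,\ind(y_i=0) + (1-h(x_i))\,\ind(y_i=1)\Big).
\]
Next I would re-index the fairness term, using $\E_{h\sim D}[h(x_i)-h(x_j)] = h(x_i)-h(x_j)$ for deterministic $h$ and renaming the two summation indices in the second half of the double sum:
\[
\sum_{(i,j)\in[n]^2}\lambda_{ij}\big(h(x_i)-h(x_j)\big) = \sum_{i=1}^n h(x_i)\Big(\sum_{j=1}^n\lambda_{ij} - \sum_{j=1}^n\lambda_{ji}\Big) = \sum_{i=1}^n h(x_i)\sum_{j\neq i}\big(\lambda_{ij}-\lambda_{ji}\big),
\]
where the diagonal ($i=j$) terms cancel.

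Adding the two displays gives $\Lagr_{\lambda,\tau}^{\rho_1}(h) = \sum_{i=1}^n\big(h(x_i)c_i^1 + (1-h(x_i))c_i^0\big)$ with exactly the costs assigned in $BEST_{\rho}$: $c_i^0 = \frac1n\ind(y_i=1)$ and $c_i^1 = \frac1n\ind(y_i=0) + \sum_{j\neq i}(\lambda_{ij}-\lambda_{ji})$ (equivalently $c_i^0=0,\ c_i^1=\frac1n+\sum_{j\neq i}(\lambda_{ij}-\lambda_{ji})$ when $y_i=0$, and $c_i^0=\frac1n,\ c_i^1=\sum_{j\neq i}(\lambda_{ij}-\lambda_{ji})$ when $y_i=1$). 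By definition of the CSC oracle, $h^*=CSC(S,c)$ minimizes $\sum_i\big(h(x_i)c_i^1+(1-h(x_i))c_i^0\big)$ over $\mathcal{H}$, hence minimizes $\Lagr_{\lambda,\tau}^{\rho_1}$ over $\mathcal{H}$, and therefore, by the affineness observation, the returned $D=h^*$ minimizes $\Lagr_{\lambda,\tau}^{\rho_1}$ over $\Delta\mathcal{H}$.

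I do not anticipate a genuine obstacle; the argument is bookkeeping. The points needing care are the cancellation of the $i=j$ terms after re-indexing the double sum, the correct split of $\ind(h(x_i)\neq y_i)$ into the label-$0$ and label-$1$ cost coefficients, and explicitly invoking linearity in $D$ to justify that restricting the primal best response to deterministic classifiers is without loss of generality.
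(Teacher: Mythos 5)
Your proposal is correct and follows essentially the same route as the paper's own proof: expand the empirical error into per-example label-dependent costs, re-index the double sum $\sum_{(i,j)}\lambda_{ij}(h(x_i)-h(x_j))$ into per-example coefficients $\sum_{j\neq i}(\lambda_{ij}-\lambda_{ji})$, and observe that the resulting objective is exactly the CSC objective with the costs set in $BEST_{\rho}$. If anything, your write-up is slightly more complete, since you explicitly justify (via linearity of $\Lagr_{\lambda,\tau}^{\rho_1}$ in $D$) that the minimum over $\Delta\mathcal{H}$ is attained at a deterministic classifier and you verify all cost cases rather than a single example.
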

\begin{proof}
    \begin{align*}
      &\argmin_{D} \Lagr_{\lambda, \tau}^{\rho_1} \\
      &= \argmin_{D} \err(D,S) + \sum_{(i,j) \in [n]^2} \lambda_{ij} \E\limits_{h\sim D} \left[h(x_i) - h(x_j)\right]\\
      &= \argmin_{D} \frac{1}{n} \sum_{i=1}^n \mathbb{E}_{h\sim D} \left[\ind(h(x_i)\neq y_i)\right]
      + \sum_{(i,j) \in [n]^2} \lambda_{ij} \E\limits_{h\sim D} [h(x_i) - h(x_j)]\\
      &= \argmin_D \  \sum_{i=1}^n \left( \frac{1}{n} \mathbb{E}_{h\sim D} \left[\ind(h(x_i)\neq y_i)\right]
      + \sum_{j \neq i} \lambda_{ij} h(x_i)
      - \sum_{j \neq i} \lambda_{ji} h(x_i)\right)\\
      &= \argmin_D \  \sum_{i=1}^n \left(\frac{1}{n} \mathbb{E}_{h\sim D} \left[\ind(h(x_i)\neq y_i)\right]
      + \sum_{j \neq i} h(x_i)\left(\lambda_{ij} - \lambda_{ji}\right) \right).
    \end{align*}

    For each $i \in [n],$ we assign the cost \[c_i^{h(x_i)} =  \frac{1}{n} \mathbb{E}_{h\sim D} \left[\ind(h(x_i)\neq y_i)\right]
      + h(x_i)\left(\lambda_{ij} - \lambda_{ji}\right).\]
    Note that the cost depends on whether $y_i = 0$ or 1. For example, take $y_i=1$ and $h(x_i)=0$. The cost
    \begin{align*}
        c_i^{h(x_i)} = c_i^0 &= \frac{1}{n} \mathbb{E}_{h\sim D} \left[\ind(h(x_i)\neq y_i)\right]
      + \sum_{j \neq i} h(x_i)\left(\lambda_{ij} - \lambda_{ji}\right)\\
      &= \frac{1}{n} \cdot 1  + \sum_{j \neq i} 0 \cdot \left(\lambda_{ij} - \lambda_{ji}\right) 
      = \frac{1}{n}
    \end{align*}
    
\end{proof}

\subsection{The Dual Player's No-regret Updates}\label{subsec:no-regret}
%CJ: commenting this out
%\begin{theorem}
%\cj{I need the actual rates of the regret to fill this out.}
%Argue that running the algorithm $T=$ with the parameters set to be gives you some $v$-approximation solution
%\end{theorem}
%\begin{proof}
%Use the above lemmas to show
%\[
%\xi_{\psi}= \frac{C_\tau}{\sqrt{T}} + \frac{C_\lambda \log(n^2+1)}{\mu_\lambda T} + \mu_\lambda k^2 C_\lambda
%\]
%
%Set $\mu_\lambda = \frac{v}{2k^2C_\lambda}$, and $T = \max\left( \frac{4C^2_{\tau}}{v^2}, \frac{4k^2C_\lambda^2 \log(n^2+1)}{v^2}\right)$. 
%
%Because $T \ge \frac{4C^2_{\tau}}{v^2}$, $\frac{v}{2} \ge \frac{C_\tau}{\sqrt{T}}$.
%
%Because $T \ge \frac{4k^2C_\lambda^2 \log(n^2+1)}{v^2}$ and $\mu_\lambda = \frac{v}{2k^2C_\lambda}$
%\begin{align*}
%\frac{C_\lambda \log(n^2+1)}{\mu_\lambda T} + \mu_\lambda k^2 C_\lambda &= \frac{C_\lambda \log(n^2+1) \cdot 2k^2C_\lambda}{v T} + \frac{v}{4}\\
%&\ge \frac{v}{4} + \frac{v}{4} = \frac{v}{2}\\
%\end{align*}
%
%Therefore, $\xi_{\psi} \le v$
%\end{proof}

In order to reason about convergence we need to restrict the dual player's action space to lie within a bounded $\ell_1$ ball, defined by the parameters $C_\tau$ and $C_\lambda$ that appear in our theorem --- and serve to trade off running time with approximation quality:
\[
\Lambda  = \left\{ \lambda \in \cR_{+}^{n^2} : \norm{\lambda}_1 \le C_\lambda \right\}, \Tau = \left\{\tau \in \cR_{+} : \norm{\tau}_1 \le C_\tau\right\}.
\]
The dual player will use exponentiated gradient descent \citep{kivinenwarmuth1997exponentiated} to update $\lambda$ and online gradient descent \citep{zinkevich2003online} to update $\tau$, where the reward function will be defined as:
\[r_\lambda(\lambda^t) = \sum_{(i,j) \in [n]^2} \lambda^t_{ij} \left( \E_{h\sim D}\left[h(x_i) - h(x_j)\right] - \alpha_{ij} -\gamma \right)\]
and
\[r_\lambda(\tau^t) =  \tau^t \left( \frac{1}{|A|}\sum_{(i, j)\in [n]^2} w_{ij} \alpha_{ij} - \eta \right).\] We provide the pseudo-code in Algorithm \ref{alg:no-regret-dynamics} but defer some of the proofs to the supplement.

\begin{algorithm}
\caption{No-Regret Dynamics}\label{alg:no-regret}
\label{alg:no-regret-dynamics}
\begin{algorithmic}[ht!]
    \State \textbf{Input:} training examples $\{ x_i, y_i \}_{i=1}^n,$ bounds $C_\lambda$ and $C_\tau$, time horizon $T$, step sizes $\mu_\lambda$ and $\{{\mu_\tau^t}\}^{t=1}_{T}$, 
    \State Set $\theta^{0}_1 = \mathbf{0} \in \mathbb{R}^{n^2}$
    \State Set $\tau^0 = 0$
    \For{$t = 1, 2, \dots, T$} 
        \State Set $\lambda^t_{ij} = C_{\lambda} \frac{\exp{\theta^{t-1}_{ij}}}{1 + \sum_{i',j' \in [n]^2} \exp{\theta^{t-1}_{i'j'}}}$ for all pairs $(i,j) \in [n]^2$
        \State Set $\tau^t = \proj_{[0, C_\tau]}\left(\tau^{t-1} + \mu^t_{\tau} \left(\frac{1}{|A|} \sum_{i,j} w_{ij} \alpha^{t-1}_{ij} - \eta\right)\right)$
        
        \State $D^t, \alpha^t \leftarrow \textnormal{BEST}_{\rho}(\lambda^t, \tau^t)$
        
%        
%        \State $\overline{D}_t \leftarrow \frac{1}{t} \sum_{t'=1}^t D_{t'}$
%        \State $\overline{\alpha}_t \leftarrow \frac{1}{t} \sum_{t'=1}^t \alpha_{t'}, \quad \Lagr_{\max} \leftarrow \Lagr \left( \overline{D}_t, \overline{\alpha}_t,  \textnormal{BEST}_{\lambda, \tau} (\overline{D}_t, \overline{\alpha}_t) \right)$
%        \State $\overline{\lambda}_t \leftarrow \frac{1}{t} \sum_{t'=1}^t \lambda_{t'}$ 
%        \State $\overline{\tau}_t \leftarrow \frac{1}{t} \sum_{t'=1}^t \tau_{t'}, \quad \Lagr_{\min} \leftarrow \Lagr \left( \textnormal{BEST}_{D, \alpha} ( \overline{\lambda}_t, \overline{\tau}_t), \overline{\lambda}_t, \overline{\tau}_t \right)$
%        \State $\nu_t \leftarrow \max \{ \Lagr(\overline{D_t}, \overline{\alpha_t}, \overline{\lambda_t}, \overline{\tau_t}) - \Lagr_{\min}, \quad \Lagr_{\max} - \Lagr(\overline{D_t}, \overline{\alpha_t}, \overline{\lambda_t}, \overline{\tau_t} \}$
%        \State
%        \If {$\nu_t \leq \nu$}
%            \State Return $(\overline{D}_t, \overline{\alpha}_t, \overline{\lambda}_t, \overline{\tau}_t)$
%        \EndIf
	\For {$(i,j) \in [n]^2$}
		\State $\theta^{t}_{ij} = \theta^{t-1}_{ij} + \mu^{t-1}_{\lambda} \left( \E_{h\sim D^t}\left[h(x_i) - h(x_j)\right] - \alpha^t_{ij} - \gamma \right)$
	\EndFor
        \State 
    \EndFor
    \State \textbf{Output:} $\frac{1}{T}\sum_{t=1}^T D^t$
\end{algorithmic}
\end{algorithm}

\begin{lemma}
\label{lem:dual_separate}
For fixed $D$ and $\alpha$, the best response optimization for the dual player is separable, i.e.
\[\argmax_{\lambda \in \Lambda, \tau \in \Tau} \Lagr(D, \alpha, \lambda, \tau) = \argmax_{\lambda \in \Lambda} \Lagr_{D, \alpha}^{\psi_1}(\lambda) \times \argmax_{\tau \in \Tau} \Lagr_{D, \alpha}^{\psi_2}(\tau), \]
where 
\[\Lagr_{D, \alpha}^{\psi_1}(\lambda) = \sum_{(i,j) \in [n]^2} \lambda_{ij} \left( \E_{h\sim D}\left[h(x_i) - h(x_j)\right] - \alpha_{ij} -\gamma \right)\] 
and 
\[\Lagr_{D, \alpha}^{\psi_2}(\tau)=\tau \left( \frac{1}{|A|} \sum_{(i, j)\in [n]^2} w_{ij} \alpha_{ij} - \eta \right).\]
\end{lemma}

\begin{lemma}\label{lem:tau-no-regret}
Running online gradient descent for $\tau^t$, i.e. $\tau^t = \proj_{[0,C_{\tau}]}\left(\tau^{t-1} + \mu^{t-1} \cdot \nabla\Lagr_{D^t, \alpha^t}^{\psi_2}\left(\tau^{t-1}\right)\right)$, with step size $\mu^t = \frac{C_{\tau}}{\sqrt{T}}$ yields the following regret
\[\max_{\tau \in \Tau} \sum_{t=1}^T \Lagr_{D^t, \alpha^t}^{\psi_2}(\tau)  - \sum_{t=1}^T \Lagr_{D^t, \alpha^t}^{\psi_2}\left(\tau^t\right) \leq C_{\tau}\sqrt{T}.\]

\end{lemma}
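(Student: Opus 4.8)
The plan is to recognize this as the textbook regret guarantee for online (projected) gradient descent on a bounded one‑dimensional domain, and to supply the only two problem‑specific ingredients: the diameter of the feasible set and a uniform bound on the gradients the dual player sees.

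First I would note that the $\tau$‑part of the Lagrangian, $\Lagr^{\psi_2}_{D^t,\alpha^t}(\tau) = \tau\bigl(\tfrac{1}{|A|}\sum_{(i,j)\in[n]^2} w_{ij}\alpha^t_{ij} - \eta\bigr)$, is linear (hence concave) in $\tau$, so the stated update $\tau^t = \proj_{[0,C_\tau]}(\tau^{t-1} + \mu\cdot\nabla\Lagr^{\psi_2}_{D^t,\alpha^t}(\tau^{t-1}))$ is exactly gradient ascent with the constant-in-$\tau$ gradient $g^t := \tfrac{1}{|A|}\sum_{(i,j)\in[n]^2} w_{ij}\alpha^t_{ij} - \eta$. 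The feasible set $\Tau = [0,C_\tau]$ is convex with diameter $C_\tau$. Next I would bound $|g^t|$: since $w_{ij}=0$ for $(i,j)\notin A$ and $w_{ij},\alpha^t_{ij}\in[0,1]$, the averaged sum lies in $[0,1]$, so subtracting $\eta\in[0,1]$ (the meaningful range for the budget) keeps $g^t\in[-1,1]$, i.e. $|g^t|\le 1$.

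Then I would run the standard potential-function argument (Zinkevich): track $\tfrac12(\tau-\tau^t)^2$ for an arbitrary fixed $\tau\in\Tau$, use that Euclidean projection onto a convex set is non-expansive to drop the projection, expand the square, and telescope over $t=1,\dots,T$; with a constant step size $\mu$ this yields $\sum_{t=1}^T \Lagr^{\psi_2}_{D^t,\alpha^t}(\tau) - \sum_{t=1}^T \Lagr^{\psi_2}_{D^t,\alpha^t}(\tau^t) \le \frac{C_\tau^2}{2\mu} + \frac{\mu}{2}\sum_{t=1}^T (g^t)^2 \le \frac{C_\tau^2}{2\mu} + \frac{\mu T}{2}$. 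Plugging in $\mu = C_\tau/\sqrt{T}$ gives $\frac{C_\tau\sqrt{T}}{2} + \frac{C_\tau\sqrt{T}}{2} = C_\tau\sqrt{T}$, which is precisely the claimed bound (and the $\mu^{t-1}$ indexing in the statement is immaterial since the schedule is constant). Taking the max over $\tau$ at the end gives the stated regret.

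There is no real obstacle here; the only things to get right are (i) justifying the gradient bound $G=1$ from the feasibility ranges of $w$, $\alpha$, and $\eta$ rather than assuming it, and (ii) choosing $\mu$ so that the two terms balance and the constant comes out exactly $C_\tau\sqrt{T}$ rather than an off-by-constant factor.
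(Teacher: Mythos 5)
Your proposal is correct and takes essentially the same route as the paper: both reduce the claim to the standard Zinkevich online gradient descent regret bound with domain diameter $C_\tau$ and gradient bound $1$, and then plug in the constant step size $\mu = C_\tau/\sqrt{T}$ to get $\frac{C_\tau^2}{2\mu} + \frac{\mu T}{2} = C_\tau\sqrt{T}$. The only (immaterial) difference is that you re-derive the regret bound via the telescoping potential argument while the paper cites it as a black box; your two-sided bound $g^t \in [-1,1]$ is in fact slightly more careful than the paper's, which only writes the upper bound.
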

\begin{proof}
First, note that $\nabla\Lagr_{D^t, \alpha^t}^{\psi_2}\left(\tau^{t-1}\right) = \frac{1}{W} \sum_{ij} w_{ij} \alpha_{ij}^{t-1} - \eta$ and
\[\tau^t = proj_{[0,C_{\tau}]} \left(\tau^{t-1} + \mu_{\tau}^t \left(\frac{1}{W} \sum_{ij} w_{ij} \alpha_{ij}^{t-1} - \eta\right)\right).\]

From \cite{zinkevich2003online}, we find that the regret of this online gradient descent (translated into the terms of our paper) is bounded as follows:

\begin{equation}\label{eq:zinkevich}
\max_{\tau \in \Tau} \sum_{t=1}^T \Lagr_{D^t, \alpha^t}^{\psi_2}(\tau)  - \sum_{t=1}^T \Lagr_{D^t, \alpha^t}^{\psi_2}\left(\tau^t\right) \leq \frac{C_{\tau}^2}{2 \mu_{\tau}^T} + \frac{\left\vert\left\vert\nabla \Lagr_{D,\alpha}^{\psi_2}\right\vert\right\vert^2}{2} \sum_{t=1}^T \mu_{\tau}^t,
\end{equation}

where the bound on our target $\tau$ term is $C_{\tau}$, the gradient of our cost function at round $t$ is $\nabla \Lagr^{\psi_2}_{D^t,\alpha^t}\left(\tau^{t-1}\right)$, and the bound $\left\vert\left\vert\nabla \Lagr_{D,\alpha}^{\psi_2}\right\vert\right\vert = \sup_{\tau \in \mathcal{T},\ t\in{[T]}} \left\vert\left\vert\nabla\Lagr_{D^t, \alpha^t}^{\psi_2}\left(\tau^{t-1}\right)\right\vert\right\vert.$ To prove the above lemma, we first need to show that this bound $\left\vert\left\vert\nabla \Lagr_{D,\alpha}^{\psi_2}\right\vert\right\vert \leq 1.$

Since $w_{ij}, \alpha_{ij}, \eta \in [0,1]$ for all pairs $(i,j)$, the Lagrangian $\frac{1}{|A|} \sum_{ij} w_{ij} \alpha_{ij} - \eta = \frac{\sum_{ij} w_{ij} \alpha_{ij}}{|A|} - \eta \leq 1.$ For all $t$, the gradient
\[\left\vert\nabla\Lagr_{D^t, \alpha^t}^{\psi_2}\left(\tau^{t-1}\right)\right\vert = \frac{\sum_{ij} w_{ij} \alpha_{ij}^{t-1}}{|A|} - \eta \leq 1.\]
Thus, \[\left\vert\nabla\Lagr_{D, \alpha}^{\psi_2}\right\vert \leq 1.\]

Note that if we define $\mu_{\tau}^t= \frac{C_{\tau}}{\sqrt{T}},$ then the summation of the step sizes is equal to
\[
\sum_{t=1}^T \mu_{\tau}^t = C_{\tau} \sqrt{T}
\]

Substituting these two results into inequality~\eqref{eq:zinkevich}, we get that the regret

\[\max_{\tau \in \Tau} \sum_{t=1}^T \Lagr_{D^t, \alpha^t}^{\psi_2}(\tau)  - \sum_{t=1}^T \Lagr_{D^t, \alpha^t}^{\psi_2}\left(\tau^t\right) \leq \frac{C_{\tau}^2}{2 \left(C_{\tau}\ /\sqrt{T}\right)} + \frac{1}{2} C_{\tau}\sqrt{T} = C_{\tau}\sqrt{T} \]

\end{proof}

\begin{lemma}\label{lem:lambda-no-regret}
Running exponentiated gradient descent for $\lambda^t$ yields the following regret:
\[
\max_{\lambda \in \Lambda} \sum_{t=1}^T \Lagr_{D^t, \alpha^t}^{\psi_1}(\lambda)  - \sum_{t=1}^T \Lagr_{D^t, \alpha^t}^{\psi_1}\left(\lambda^t\right) \leq 2C_{\lambda}  \sqrt{T \log n}.
\]
\end{lemma}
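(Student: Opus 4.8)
The plan is to recognize that Algorithm~\ref{alg:no-regret}'s update of $\lambda$ is nothing but \emph{scaled Hedge} (multiplicative weights) run over $N := n^2+1$ experts, and then invoke the textbook regret bound for Hedge, exactly paralleling Lemma~\ref{lem:tau-no-regret} but with exponentiated gradient in place of online gradient descent. The $\ell_1$-ball domain $\Lambda = \{\lambda\in\cR_{+}^{n^2}:\norm{\lambda}_1\le C_\lambda\}$ is handled by the standard ``dummy coordinate'' reduction: a point $\lambda\in\Lambda$ is identified with the distribution $p=(\lambda/C_\lambda,\;1-\norm{\lambda}_1/C_\lambda)\in\Delta_N$ over the $n^2$ pairs together with one extra ``slack'' expert. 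Under this identification, the update $\lambda^t_{ij} = C_\lambda\exp(\theta^{t-1}_{ij})/(1+\sum_{i',j'}\exp(\theta^{t-1}_{i'j'}))$ is precisely $C_\lambda$ times the Hedge iterate with cumulative gains $\theta^{t-1}$, the constant $1$ in the denominator being $e^{0}$, the weight of the slack expert (whose cumulative gain stays $0$ because it appears in no update).

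Next, since $\Lagr^{\psi_1}_{D^t,\alpha^t}(\lambda)=\sum_{(i,j)\in[n]^2}\lambda_{ij}\,g^t_{ij}$ with $g^t_{ij}:=\E_{h\sim D^t}[h(x_i)-h(x_j)]-\alpha^t_{ij}-\gamma$ is linear in $\lambda$ and the slack expert contributes $0$, maximizing over $\lambda\in\Lambda$ coincides with maximizing $C_\lambda\sum_k p_k\tilde g^t_k$ over $p\in\Delta_N$, where $\tilde g^t=(g^t,0)$; in particular the best fixed $\lambda\in\Lambda$ in hindsight corresponds to the best fixed expert, so the ``versus best expert'' Hedge guarantee translates directly into the ``versus best $\lambda\in\Lambda$'' statement we want. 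Each gain coordinate is bounded since $\E_{h\sim D^t}[h(x_i)-h(x_j)]\in[-1,1]$ and $\alpha^t_{ij},\gamma\in[0,1]$, so the range of $\tilde g^t$ is a constant (independent of $n$ and $T$).

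Finally, apply the Hedge regret guarantee: run over $N$ experts with gain vectors of range $R=O(1)$ and step size $\mu_\lambda=\Theta(\sqrt{(\log N)/T})$ — which is how the step-size input to Algorithm~\ref{alg:no-regret} is instantiated — and the Hoeffding-type analysis of multiplicative weights bounds the regret against the best fixed expert by $R\sqrt{(T/2)\log N}$. Multiplying by $C_\lambda$ (by linearity) and using $\log N=\log(n^2+1)\le 2\log n + O(1)=O(\log n)$, and tracking the precise constants in the gain range, one obtains
\[
\max_{\lambda\in\Lambda}\sum_{t=1}^T\Lagr^{\psi_1}_{D^t,\alpha^t}(\lambda)-\sum_{t=1}^T\Lagr^{\psi_1}_{D^t,\alpha^t}(\lambda^t)\ \le\ 2C_\lambda\sqrt{T\log n},
\]
as claimed. (Exponentiated gradient is used here rather than Euclidean online gradient descent precisely because the $\lambda$-domain is a high-dimensional simplex, where the $\sqrt{\log N}$ dependence of Hedge, rather than the $\sqrt{N}$ of OGD, is what keeps the bound small.)

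The only non-mechanical step is setting up the $\ell_1$-ball-to-simplex reduction correctly and verifying that the algorithm's iterate is \emph{exactly} scaled Hedge on the augmented expert set with gain vectors that leave the slack coordinate untouched; once that correspondence is nailed down, the rest is a black-box invocation of the multiplicative-weights regret bound together with bookkeeping of constants (the range of $g^t$, $\log(n^2+1)$ versus $\log n$, and the tuned value of $\mu_\lambda$).
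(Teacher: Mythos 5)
Your proposal is correct and follows essentially the same route as the paper: the paper also augments $\lambda$ with a zero ``slack'' coordinate to reduce the $\ell_1$-ball to a scaled simplex, bounds the $\ell_\infty$ norm of the reward vector $\bigl(\E_{h\sim D^t}[h(x_i)-h(x_j)]-\alpha^t_{ij}-\gamma\bigr)_{ij}$ by a constant, and invokes the exponentiated-gradient (Hedge) regret bound with step size $\mu=\frac{1}{C_\lambda}\sqrt{\log n/T}$ to get $2C_\lambda\sqrt{T\log n}$. Your treatment is in fact slightly more careful than the paper's in spelling out the iterate-to-Hedge correspondence and in flagging the $\log(n^2+1)$ versus $\log n$ bookkeeping, which the paper glosses over.
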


\begin{proof}
In each round, the dual player gets to charge either some $(i,j)$ constraint or no constraint at all. In other words, he is presented with $n^2+1$ options. Therefore, to account for the option of not charging any constraint, we define vector $\lambda' = \left(\lambda, 0 \right)$, where the last coordinate, which will always be $0$, corresponds to the option of not charging any constraint.

Next, we define the reward vector $\zeta^t$ for $\lambda'^t$ as
\[
\zeta^t=\left(\left(\E\limits_{h\sim D^t}\left[h(x_i) - h(x_j)\right] - \alpha^t_{ij} -\gamma\right)_{i,j \in [n]^2}, 0\right).
\]

Hence, the reward function is \[r(\lambda'^t) = \zeta^t \cdot \lambda'^t = \Lagr_{D^t, \alpha^t}^{\psi_1}\left(\lambda^t\right).\]
The gradient of the reward function is
\[ \nabla r(\lambda'^t) = \left(\left(\nabla r(\lambda^t)\right)_{i,j \in [n^2]}, 0\right) = \left(\zeta^t, 0\right)\]

Note that the L-$\infty$ norm of the gradient is bounded by 1, i.e. \[\left\vert\left\vert \nabla r(\lambda'^t)\right\vert\right\vert_{\infty} \leq 1\]
because for any $t$, each respective component of the gradient, $\E\limits_{h\sim D^t}\left[h(x_i) - h(x_j)\right] - \alpha^t_{ij} -\gamma$, is bounded by 1.\\

Here, by the regret bound of \cite{kivinenwarmuth1997exponentiated}, we obtain the following regret bound:
\begin{align*}
&\max_{\lambda \in \Lambda} \sum_{t=1}^T \Lagr_{D^t, \alpha^t}^{\psi_1}(\lambda)  - \sum_{t=1}^T \Lagr_{D^t, \alpha^t}^{\psi_1}(\lambda^t) \\
&\leq \frac{\log{n}}{\mu} + \mu \left\vert\left\vert\lambda'\right\vert\right\vert_1^2 \left\vert\left\vert\nabla r(\lambda') \right\vert\right\vert_{\infty}^2 T \\
&\le \frac{\log{n}}{\mu} + \mu C_{\lambda}^2 T.
\end{align*}

If we take $\mu = \frac{1}{C_{\lambda}} \sqrt{\frac{\log n}{T}},$ the regret is bounded as follows:

\begin{equation}
\max_{\lambda \in \Lambda} \sum_{t=1}^T \Lagr_{D^t, \alpha^t}^{\psi_1}(\lambda)  - \sum_{t=1}^T \Lagr_{D^t, \alpha^t}^{\psi_1}(\lambda^t) \leq 2C_{\lambda}  \sqrt{T \log n}.
\end{equation}

\end{proof}

\begin{remark}
If the primal learner's approximate best response satisfies
\[
\sum_{t=1}^T \Lagr\left(D^t, \alpha^t, \lambda^t, \tau^t\right) - \min_{D \in \Delta(H), \alpha \in [0,1]^{n^2}} \sum_{t=1}^T \Lagr\left(D, \alpha, \lambda^t, \tau^t\right) \le \xi_\rho T
\] along with dual player's regret of $\xi_\rho T$, then $\left(\bar{D}, \bar{\alpha}, \bar{\lambda}, \bar{\tau}\right)$ is an $\left(\xi_\rho + \xi_\psi\right)$-approximate solution 
\end{remark}

\begin{theorem}\label{thm:approx-equilibrium}
Let $\left(\hat{D}, \hat{\alpha}, \hat{\lambda}, \hat{\tau}\right)$ be a $v$-approximate solution to the Lagrangian problem. More specifically, 
\[\Lagr\left(\hat{D}, \hat{\alpha}, \hat{\lambda}, \hat{\tau}\right) \le \min_{D \in \Delta(\mathcal{H}), \alpha \in [0,1]^{n^2}} \Lagr\left(D, \alpha, \hat{\lambda}, \hat{\tau}\right) + v,\]
and 
\[\Lagr(\hat{D}, \hat{\alpha}, \hat{\lambda}, \hat{\tau}) \ge \max_{\lambda \in \Lambda, \tau \in \Tau} \Lagr\left(\hat{D}, \hat{\alpha}, \lambda, \tau\right) - v.\]

Then, $err\left(\hat{D}, S\right) \le OPT + 2v$. And as for the constraints, we have
\[\E_{h\sim \hat{D}}\left[h(x_i) - h(x_j)\right] \le \hat{\alpha}_{ij} + \gamma + \frac{1+2v}{C_\lambda}, \forall (i,j) \in [n]^2\] 
\[\frac{1}{|A|}\sum_{(i, j)\in [n]^2} \hat{w}_{ij} \hat{\alpha}_{ij}  \le \eta + \frac{1+2v}{C_\tau}.\]
\end{theorem}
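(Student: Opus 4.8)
The plan is to run the standard argument relating approximate saddle points of the Lagrangian to approximately optimal and approximately feasible primal points (the same template used in the reductions framework). Throughout, let $(D^*,\alpha^*)$ be an exactly optimal feasible point of the Fair ERM program \eqref{min}--\eqref{sum}, so $\err(D^*,S)=\OPT$ and both families of constraints hold at $(D^*,\alpha^*)$. Two preliminary facts do the real work. First, since $\hat\lambda\in\Lambda$ and $\hat\tau\in\Tau$ are coordinatewise nonnegative and $(D^*,\alpha^*)$ is feasible, every penalty term of $\Lagr(D^*,\alpha^*,\hat\lambda,\hat\tau)$ is nonpositive, so $\Lagr(D^*,\alpha^*,\hat\lambda,\hat\tau)\le\err(D^*,S)=\OPT$. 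Second, the constant classifier paired with $\alpha\equiv 0$ is feasible (the terms $\E_{h\sim D}[h(x_i)-h(x_j)]$ all vanish and the $\ell_1$ sum is $0$) and has error at most $1$, so $\OPT\le 1$.

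To bound the error, I would combine the two approximation inequalities in the hypothesis. Primal near-optimality gives $\Lagr(\hat D,\hat\alpha,\hat\lambda,\hat\tau)\le\min_{D,\alpha}\Lagr(D,\alpha,\hat\lambda,\hat\tau)+v\le\Lagr(D^*,\alpha^*,\hat\lambda,\hat\tau)+v\le\OPT+v$. Dual near-optimality, evaluated at $(\lambda,\tau)=(\mathbf 0,0)$, gives $\Lagr(\hat D,\hat\alpha,\hat\lambda,\hat\tau)\ge\Lagr(\hat D,\hat\alpha,\mathbf 0,0)-v=\err(\hat D,S)-v$. Chaining the two yields $\err(\hat D,S)\le\OPT+2v$, which is the first claim; I would also record for reuse that $\Lagr(\hat D,\hat\alpha,\hat\lambda,\hat\tau)\le\OPT+v\le 1+v$.

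For the per-pair fairness bound, fix $(i,j)$ and set $B_{ij}:=\E_{h\sim\hat D}[h(x_i)-h(x_j)]-\hat\alpha_{ij}-\gamma$; the target is $B_{ij}\le(1+2v)/C_\lambda$. Evaluate dual near-optimality at the extreme point of $\Lambda$ that puts all its mass $C_\lambda$ on coordinate $(i,j)$ with $\tau=0$: this gives $\Lagr(\hat D,\hat\alpha,\hat\lambda,\hat\tau)\ge\err(\hat D,S)+C_\lambda B_{ij}-v\ge C_\lambda B_{ij}-v$ (using $\err\ge 0$). Combined with $\Lagr(\hat D,\hat\alpha,\hat\lambda,\hat\tau)\le 1+v$, rearranging gives $B_{ij}\le(1+2v)/C_\lambda$. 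The $\ell_1$ constraint is identical: with $B:=\frac{1}{|A|}\sum_{(i,j)}\hat w_{ij}\hat\alpha_{ij}-\eta$ (the Lagrangian's sum ranges effectively over $A$, where $w$ and $\hat w$ agree), apply dual near-optimality at $(\lambda,\tau)=(\mathbf 0,C_\tau)$ to get $C_\tau B-v\le\Lagr(\hat D,\hat\alpha,\hat\lambda,\hat\tau)\le 1+v$, hence $B\le(1+2v)/C_\tau$.

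The argument is mostly bookkeeping, but the step I would expect to be the real obstacle is the constraint-violation bounds: one must notice that the useful way to use dual near-optimality is to test it against the \emph{extreme} dual plays ($C_\lambda$ concentrated on a single coordinate, or $\tau=C_\tau$) rather than against $\hat\lambda,\hat\tau$ themselves, and then one must combine this with the two auxiliary facts — nonnegativity of the dual iterates (which is what makes $\Lagr(D^*,\alpha^*,\cdot,\cdot)\le\OPT$) and boundedness of the loss ($\OPT\le1$, which is what converts the Lagrangian-value inequality into the explicit $1/C_\lambda$ and $1/C_\tau$ slacks). Getting those constants exactly right is the only substantive part.
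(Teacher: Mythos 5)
Your proposal is correct and follows essentially the same route as the paper: evaluate the primal-side inequality at the optimal feasible point $(D^*,\alpha^*)$ (whose penalty is nonpositive) to get $\Lagr(\hat D,\hat\alpha,\hat\lambda,\hat\tau)\le \OPT+v$, evaluate the dual-side inequality at $(\mathbf 0,0)$ for the error bound and at the extreme dual plays ($C_\lambda$ on a single coordinate, or $\tau=C_\tau$) together with boundedness of the loss for the constraint-violation bounds. The paper routes the last step through its explicit best-response characterization of the dual player and an unnecessary case split on feasibility of $(\hat D,\hat\alpha)$, but the substance and the constants are identical to yours.
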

\begin{proof}
Let $(D^*, \alpha^*)=\argmin_{(D,\alpha) \in \Omega(S, \hat{w}, \gamma, \eta)} \err(D, S)$, the optimal solution to the Fair ERM. Also, define
\begin{align*}
&penalty_{S,w}\left(D, \alpha, \lambda, \tau\right) \\
&:= \sum_{(i,j)} \lambda_{ij} \left( \E\limits_{h\sim D}\left[h(x_i) - h(x_j)\right] - \alpha_{ij} -\gamma \right) + \tau \left( \frac{1}{|A|}\sum_{(i,j)} \hat{w}_{ij} \alpha_{ij} - \eta  \right).
\end{align*}

Note that for any $D$ and $\alpha$, $\max_{\lambda \in \Lambda, \tau \in \Tau} penalty_{S, \hat{w}}(D, \alpha, \lambda, \tau) \ge 0$ because one can always set $\lambda = 0$ and $\tau = 0$.

\begin{align*}
\max_{\lambda \in \Lambda, \tau \in \Tau} \Lagr\left(\hat{D}, \hat{\alpha}, \lambda, \tau\right) &\le \Lagr\left(\hat{D}, \hat{\alpha}, \hat{\lambda}, \hat{\tau}\right) + v\\
&\le \min_{D \in \Delta(\mathcal{H}), \alpha \in [0,1]^{n^2}} \Lagr\left(D, \alpha \hat{\lambda}, \hat{\tau}\right) + 2v\\
&\le \Lagr\left(D^*, \alpha^*, \hat{\lambda}, \hat{\tau}\right) + 2v\\
&= \err\left(D^*, S\right) + penalty_{S, \hat{w}}\left(D^*, \alpha^*, \hat{\lambda}, \hat{\tau}\right) + 2v\\
&\le \err\left(D^*, S\right) + 2v
\end{align*}

The first inequality and the third inequality are from the definition of $v$-approximate saddle point, and the second to last equality comes from the fact that $(D^*,a^*)$ is a feasible solution.

Now, we consider two cases when$(\hat{D},\hat{\alpha})$ is a feasible solution and when it's not.
\begin{enumerate}
\item $\left(\hat{D},\hat{\alpha}\right) \in \Omega\left(S,\hat{w}, \gamma, \eta\right)$\\
In this case, $\max_{\lambda \in \Lambda, \tau \in \Tau} penalty_{S, \hat{w}}\left(\hat{D}, \hat{\alpha}, \lambda, \tau\right) = 0$ because by the definition of being a feasible solution, we have $\E_{h\sim D}\left[h(x_i) - h(x_j)\right] \le \alpha_{ij} + \gamma, \forall (i,j) \in [n]^2$ and\\
$\frac{1}{|A|}\sum_{(i, j)\in [n]^2} \hat{w}_{ij} \alpha_{ij}  \le \eta.$
 Hence, $\max_{\lambda \in \Lambda, \tau \in \Tau} \Lagr\left(\hat{D}, \hat{\alpha}, \lambda, \tau\right) = err\left(\hat{D}, S\right)$. Therefore, we have $err\left(\hat{D}, S\right) \le  err\left(D^*, S\right) + 2v$.

\item $\left(\hat{D},\hat{\alpha}\right) \notin \Omega\left(S,\hat{w}, \gamma, \eta\right)$\\
\begin{align*}
&\max_{\lambda \in \Lambda, \tau \in \Tau} \Lagr\left(\hat{D}, \hat{\alpha}, \lambda, \tau\right) = err\left(\hat{D}, S\right) + \max_{\lambda \in \Lambda, \tau \in \Tau} penalty_{S, \hat{w}}\left(\hat{D}, \hat{\alpha}, \lambda, \tau\right).
\end{align*}
Therefore, $err\left(\hat{D}, S\right) \le  err\left(D^*, S\right) + 2v$ because \[\max_{\lambda \in \Lambda, \tau \in \Tau} penalty_{S, \hat{w}}\left(\hat{D}, \hat{\alpha}, \lambda, \tau\right) \ge 0.
\]

Now, we show that even when $(\hat{D}, \hat{\alpha})$ is not a feasible solution, the constraints are violated only by so much. Note that\begin{align*}
&\max_{\lambda \in \Lambda, \tau \in \Tau} \Lagr(\hat{D}, \hat{\alpha}, \lambda, \tau) \\
&= err(\hat{D}, S) + \max_{\lambda \in \Lambda, \tau \in \Tau} penalty_{S, \hat{w}}(\hat{D}, \hat{\alpha}, \lambda, \tau) \le \err(D^*, S) + 2v\\
\end{align*}
Therefore, 
\begin{align*}
&\max_{\lambda \in \Lambda, \tau \in \Tau} penalty_{S, \hat{w}}(\hat{D}, \hat{\alpha}, \hat{\lambda}, \hat{\tau}) \le \err(D^*, S)- err(\hat{D}, S) + 2v\\
&\max_{\lambda \in \Lambda, \tau \in \Tau} penalty_{S, \hat{w}}(\hat{D}, \hat{\alpha}, \hat{\lambda}, \hat{\tau}) \le 1 + 2v
\end{align*}

Let $\lambda^*, \tau^* = BEST_{\psi}\left(\hat{D}, \hat{\alpha}\right)$, which minimizes the function as shown in Lemma \ref{lem:dual_lambda} and \ref{lem:dual_tau}. Now, consider 
\[
\sum_{(i,j)} \lambda^*_{ij} \left( \E\limits_{h\sim D}\left[h(x_i) - h(x_j)\right] - \alpha_{ij} -\gamma \right) + \tau^* \left( \frac{1}{|A|}\sum_{(i,j)} \hat{w}_{ij} \alpha_{ij} - \eta  \right) \le 1 + 2v
\]

Say $(i^*,j^*) = \argmax_{(i,j) \in [n^2]} \E\limits_{h\sim D}\left[h(x_i) - h(x_j)\right] - \alpha_{ij} -\gamma$. Remember that if $\E\limits_{h\sim D}\left[h(x_{i^*})-h(x_{j^*})\right] - \alpha_{i^*j^*} -\gamma > 0$, then $\lambda^*_{i^*j^*} = C_{\tau}$ and 0 for the other coordinates and else, it's just a zero vector. Also, $\tau=C_{\tau}$ if $\sum_{(i,j)} \hat{w}_{ij} \alpha_{ij} - \eta > 0$ and 0 otherwise. Thus, 

\[
\sum_{(i,j)} \lambda^*_{ij} \left( \E\limits_{h\sim D}\left[h(x_i) - h(x_j)\right] - \alpha_{ij} -\gamma \right) \ge 0 
\] 
\[
\tau^* \left( \frac{1}{|A|}\sum_{(i,j)} \hat{w}_{ij} \alpha_{ij} - \eta  \right) \ge 0
\]

Therefore, we have
\[
\max_{i,j \in [n]^2}\left(\E\limits_{h\sim D}\left[h(x_i) - h(x_j)\right] - \alpha_{ij} -\gamma\right) \le \frac{1+2v}{C_{\lambda}},
\]
and
\[
\frac{1}{|A|}\sum_{(i, j)\in [n]^2} \hat{w}_{ij} \hat{\alpha}_{ij}  \le \eta + \frac{1+2v}{C_\tau}
\]

\end{enumerate}
\end{proof}

Now, the proof of Theorem \ref{thm:algorithm-guarantee} is simply plugging in the best response guarantee of the learner, Lemma \ref{lem:primal_alpha} and \ref{lem:primal_D}, and the no-regret guarantee of the auditor, Lemma \ref{lem:tau-no-regret} and \ref{lem:lambda-no-regret}, into Theorem \ref{thm:approx-equilibrium}. We defer the actual proof to the supplement.

%\subsection{Guarantee}
%The average of the the primal player's actions over sufficiently many rounds of no-regret dynamics converges to an approximate optimal solution.
%\begin{theorem}\label{thm:algorithm-guarantee}
%Running the algorithm for at least $T=(\frac{2C_{\lambda}\sqrt{\log(n)} +C_{\tau}}{\nu})^2$ rounds outputs $(\hat{D}, \hat{\alpha})$ with the following guarantee.

%\[err(\hat{D}, S) \le \min_{(D,\alpha) \in \Omega(S, \hat{w}, \gamma, \eta)} \err(D, S) + 2\nu.\]

% And as for the constraints, $\E_{h\sim \hat{D}}[h(x_i) - h(x_j)] \le \hat{\alpha}_{ij} + \gamma + \frac{1+2\nu}{C_\lambda}, \forall (i,j) \in [n]^2$ and
%$\frac{1}{|A|}\sum_{(i, j)\in [n]^2} \hat{w}_{ij} \hat{\alpha}_{ij}  \le \eta + \frac{1+2v}{C_\tau}.$
%\end{theorem}

\section{Generalization}\label{sec:generalization}
In this section, we show that fairness loss generalizes out-of-sample. (Error generalization follows from the standard VC-dimension bound, which --- because it is a uniform convergece statement is unaffected by the addition of fairness constraints. See the supplement for the standard statement.)

Proving that the fairness loss generalizes doesn't follow immediately from a standard VC-dimension argument for several reasons: it is not linearly separable, but defined as an average over non-disjoint \emph{pairs} of individuals in the sample. The difference between empirical fairness loss and true fairness loss of a randomized hypothesis $D \in \Delta\mathcal{H}$ is also a non-convex function of the supporting hypotheses $h$, and so it is not sufficient to prove a uniform convergence bound merely for the base hypotheses in our hypothesis class $\mathcal{H}$. We circumvent these difficulties by making use of an $\epsilon$-net argument, together with an application of a concentration inequality, and an application of Sauer's lemma. Briefly, we show that with respect to fairness loss, the continuous set of distributions over classifiers have an $\epsilon$-net of sparse distributions. Using the two-sample trick and Sauer's lemma, we can bound the number of such sparse distributions. The end result is the following generalization theorem:

%Because the error of a randomized hypothesis $D$ is just a convex combination of the error of the support hypotheses, $err(D, \cP) - err(D, S) =\E_{h \sim D}[err(h, \cP) - err(h, S)]$. The error generalization bound holds over $\Delta\mathcal{H}$ as well.

%\begin{corollary}[corollary of \cite{kearns1994introduction}]
%Fix some hypothesis class $\mathcal{H}$ and distribution $\cP$. Let $S \sim P^n$ be a dataset consisting of $n$ examples $\{x_i, y_i\}_{i=1}^n$ sampled i.i.d. from $\cP$. Then, for any $0<\delta<1$, with probability $1-\delta$, for every $D \in \Delta\mathcal{H}$, we have
%\[
%\left\vert err(D, \cP) - err(D, S)\right\vert \le O\left( \sqrt{\frac{VCDIM(\mathcal{H})+ log(\frac{1}{\delta})}{n}}\right)
%\]
%\end{corollary}

%\mk{Not really sure KV94 is the right citation for this, and the corollary above is really so standard that it might be preferable to omit it and use the
%space to provide a little interpretation of the theorem below, which right now is just dropped on the reader with no discussion.}

\begin{theorem}\label{thm:fairness-loss-generalization}
Let $S$ consists of $n$ i.i.d points drawn from $\cP$ and let $M$ represent a set of $m$ pairs randomly drawn from $S \times S$. Then we have:
\begin{align*}
&\Pr_{\substack{S \sim \cP^n \\ M \sim (S \times S)^m}}\left(\sup_{D \in \Delta\mathcal{H}}\left\vert\Pi_{D,w,\gamma}(M) - \E_{(x,x') \sim \cP^2}\left[\Pi_{D,w,\gamma}(x,x')\right]\right\vert > 2\epsilon \right) \\
&\le \left( 8 \cdot \left(\frac{e\cdot 2n}{d}\right)^{dk} \exp\left(\frac{-n\epsilon^2}{32}\right)+ \left(\frac{e\cdot 2n}{d}\right)^{dk'} \exp\left(-8m\epsilon^2\right) \right),
\end{align*}
where $k'=\frac{2 \ln(2m)}{\epsilon^2} + 1$, $k=\frac{\ln(2n^2)}{8\epsilon^2} + 1$, and $d$ is the VC-dimension of $\mathcal{H}$.
\end{theorem}
To interpret this theorem, note that the right hand side (the probability of a failure of generalization) begins decreasing exponentially fast in the data and fairness constraint sample parameters $n$ and $m$ as soon as $n \geq \Omega( d\log(n)\log(n/d))$ and $m \geq \Omega(d\log(m)\log(n/d))$.

%%% Local Variables:
%%% mode: latex
%%% TeX-master: "main"
%%% End:

\section{A Behavioral Study}
\label{sec:experiment}

%\begin{figure*}[h]
%\centering
%\includegraphics[width=0.85\textwidth]{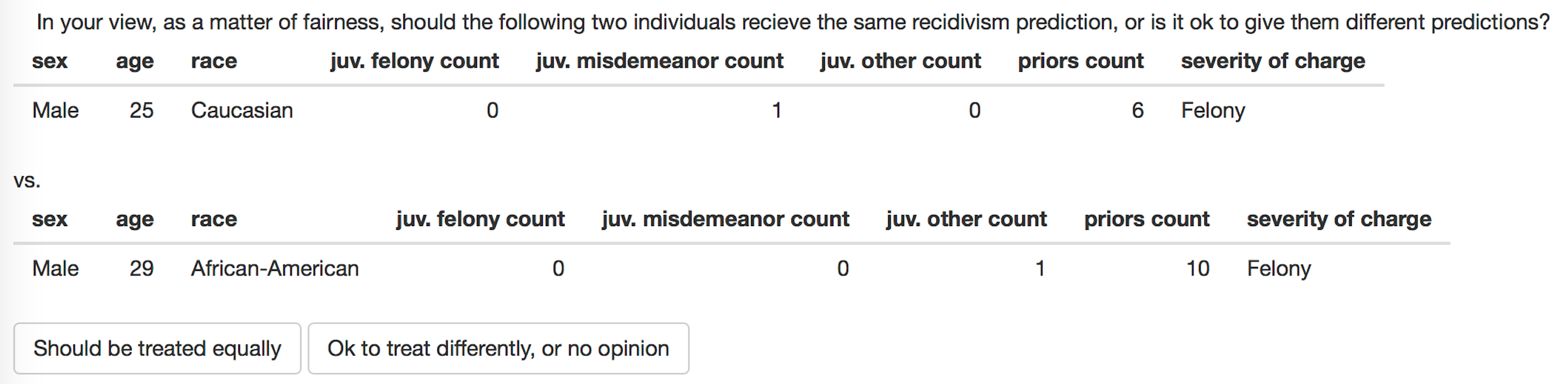}
%\caption{Screenshot of sample subjective fairness elicitation question posed to human subjects.}
%\label{fig:webapp}
%\end{figure*}

\lsedit{The framework and algorithm we have provided can be viewed as a tool to elicit and enforce a notion of fairness defined by a collection of stakeholders. In  this section, we describe preliminary results from a human-subject study we performed in which pairwise fairness preferences were elicited and enforced by our algorithm.\\ We note that the subjects included in our empirical study were not stakeholders affected by the algorithm we used (the COMPAS algorithm). Thus, our results should not be interpreted as cogent for any policy modifications to the COMPAS algorithm. We instead report our empirical findings primarily to showcase the performance of our algorithm and to act as a template for what should be reported if our framework were applied with relevant stakeholders (for example, if fairness preferences about COMPAS data were elicited from inmates).\footnote{We omit such an empirical study due to the difficulty of accessing such stakeholders and leave this for future work.}}

\lsdelete{The framework and algorithm we have provided
can be viewed as a potentially powerful tool for empirically studying
subjective individual fairness as a {\em behavioral\/} phenomenon.\\ In this section
we describe preliminary results from a human-subject study we performed in which subjective
fairness was elicited and then enforced by our algorithm.}

\subsection{Data}

Our study used the COMPAS recidivism data gathered by ProPublica
\footnote{
The data can be accessed on ProPublica's Github
page \href{https://github.com/propublica/compas-analysis/blob/master/compas-scores-two-years.csv}{here}.
We cleaned the data as in the ProPublica study, removing any records with
missing data. This left 5829 records, where the base rate of two-year recidivism was $46\%$.
}
in their celebrated analysis of
Northepointe's risk assessment algorithm \cite{propublica1}. This data consists of defendants
from Broward County in Florida between 2013 to 2014.
For each defendant the data consists of
sex (male, female), age (18-96), race (African-American, Caucasian, Hispanic, Asian, Native American),
juvenile felony count, juvenile misdemeanor count, number of other juvenile offenses,
number of prior adult criminal offenses, the severity of the crime for which they were incarcerated (felony or misdemeanor),
as well as the outcome of whether or not they did in fact recidivate.
Recidivism is defined as a new arrest within 2 years, not counting traffic violations
and municipal ordinance violations.

\subsection{Subjective Fairness Elicitation}
\begin{figure}[h]
\centering
\includegraphics[width=1.0\textwidth]{figures/webapp2}
\caption{Screenshot of sample subjective fairness elicitation question posed to human subjects.}
\label{fig:webapp}
\end{figure}

We implemented our fairness framework via a web app that
elicited subjective fairness notions from 43 undergraduates at a major research university.
After reading a document describing the data and recidivism prediction task,
each subject was presented with 50 randomly chosen pairs of records from the COMPAS data set
%as illustrated in Figure~\ref{fig:webapp}, 
and asked whether in their opinion the two individuals
should treated (predicted) equally or not. Importantly, the subjects were shown only the features for
the individuals, and not their actual recidivism outcomes, since we sought to elicit subjects' fairness notions
regarding the predictions of those outcomes. While absolutely no guidance was given to subjects regarding fairness,
the elicitation framework allows for rich possibilities. For example, subjects could choose to ignore demographic
factors or criminal histories entirely if they liked, or a subject who believes that minorities are more vulnerable
to overpolicing could discount their criminal histories relative to Caucasians in their pairwise elicitations.

For each subject, the pairs they identified to be treated equally were taken as constraints on
error minimization with respect to the actual recidivism outcomes over the entire COMPAS dataset, and our algorithm
was applied to solve this constrained optimization problem,
using a linear threshold heuristic
as the underlying learning oracle
\cite{KNRW18}.
We ran our algorithm with
$\eta = 0$ and variable $\gamma$ in Equations (\ref{min}) through (\ref{sum}), which represents the strongest
enforcement of subjective fairness --- the difference in predicted values must be at most $\gamma$ on {\em every\/} pair
selected by a subject. Because the issues we are most interested in here (convergence, tradeoffs with accuracy, and heterogeneity of fairness preferences) are orthogonal to generalization --- and because we prove VC-dimension based generalization theorems --- for simplicity, the results we report are in-sample.

\subsection{Results}

\begin{figure*}[h]

\centering
\begin{subfigure}[t]{0.3\textwidth}
	\includegraphics[width=1.0\textwidth]{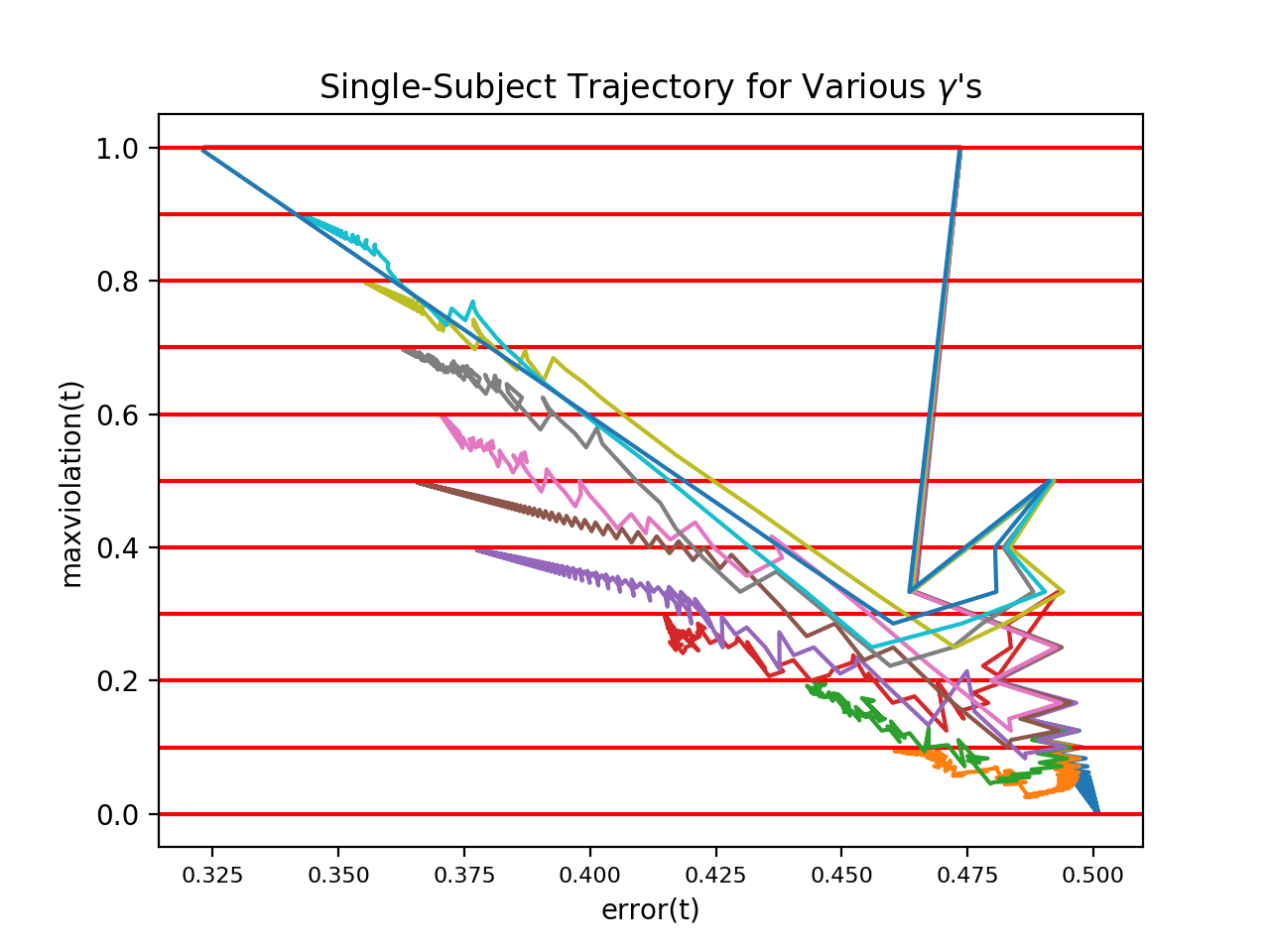}
	\subcaption{}\label{fig:trajectory}
\end{subfigure}
\begin{subfigure}[t]{0.3\textwidth}
	\includegraphics[width=1.0\textwidth]{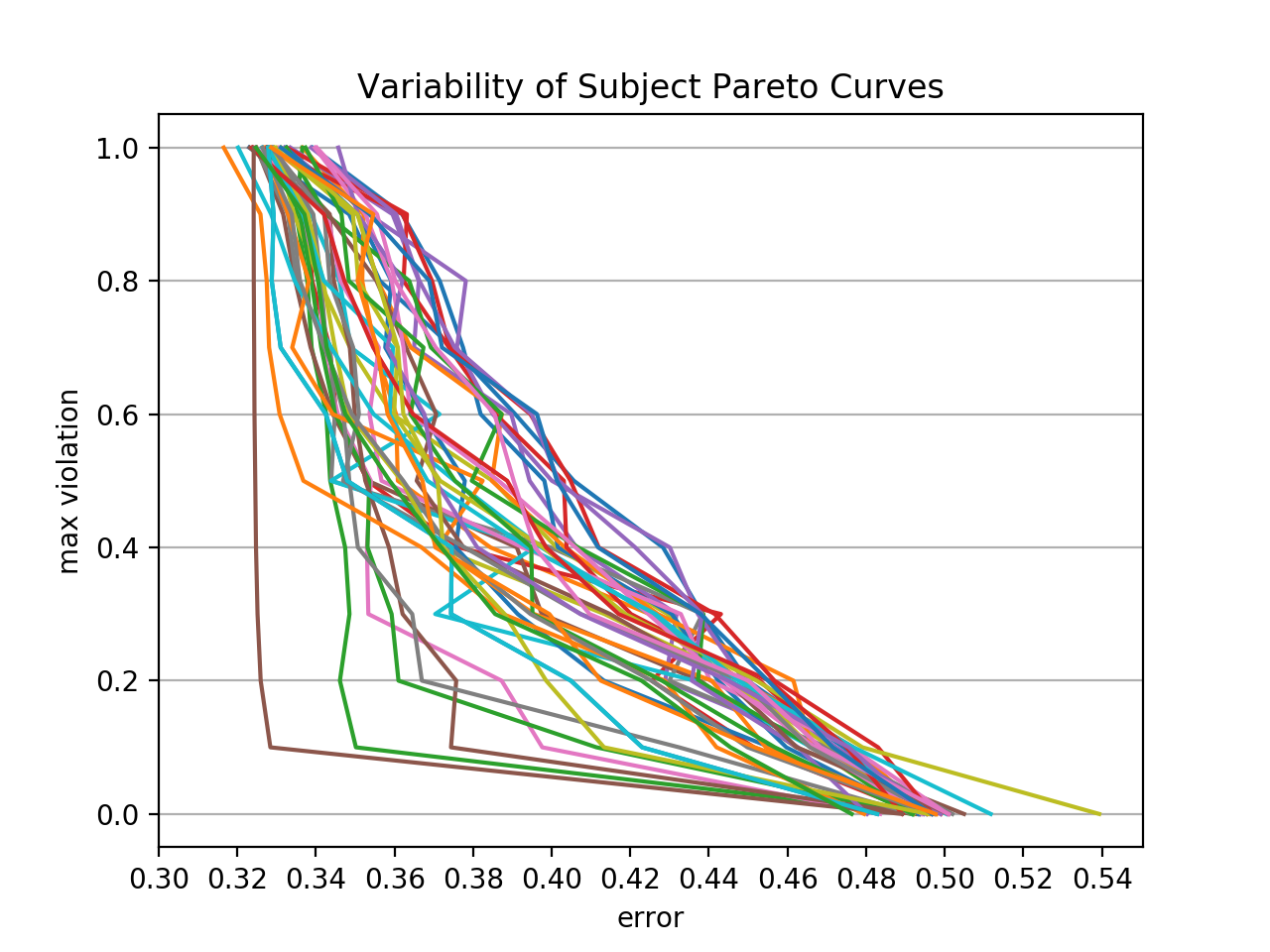}
	\subcaption{}\label{fig:pareto}
\end{subfigure}
\begin{subfigure}[t]{0.3\textwidth}
	\includegraphics[width=1.0\textwidth]{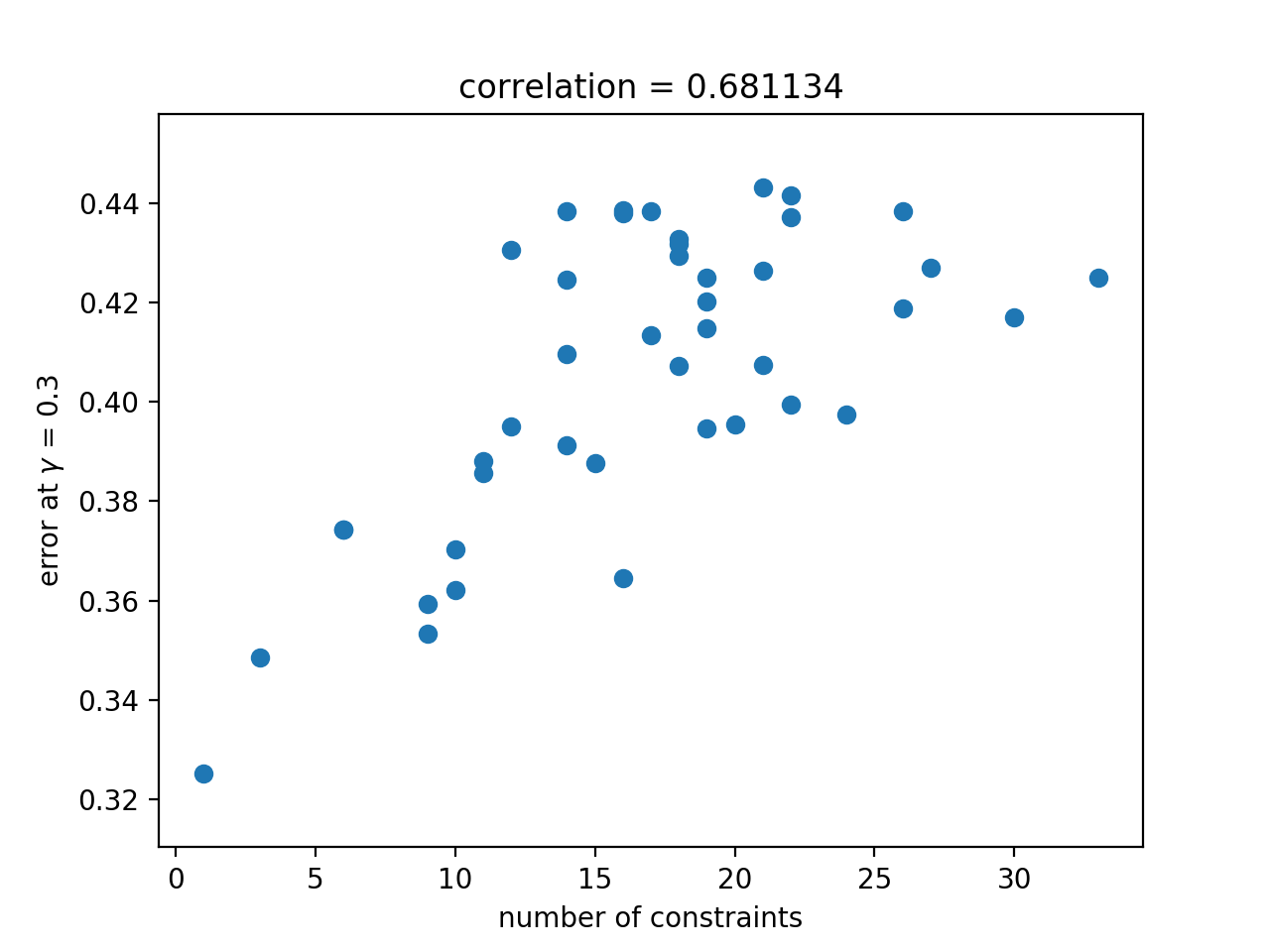}
	\subcaption{}\label{fig:numconstraints}
\end{subfigure}
\begin{subfigure}[t]{0.3\textwidth}
	\includegraphics[width=1.0\textwidth]{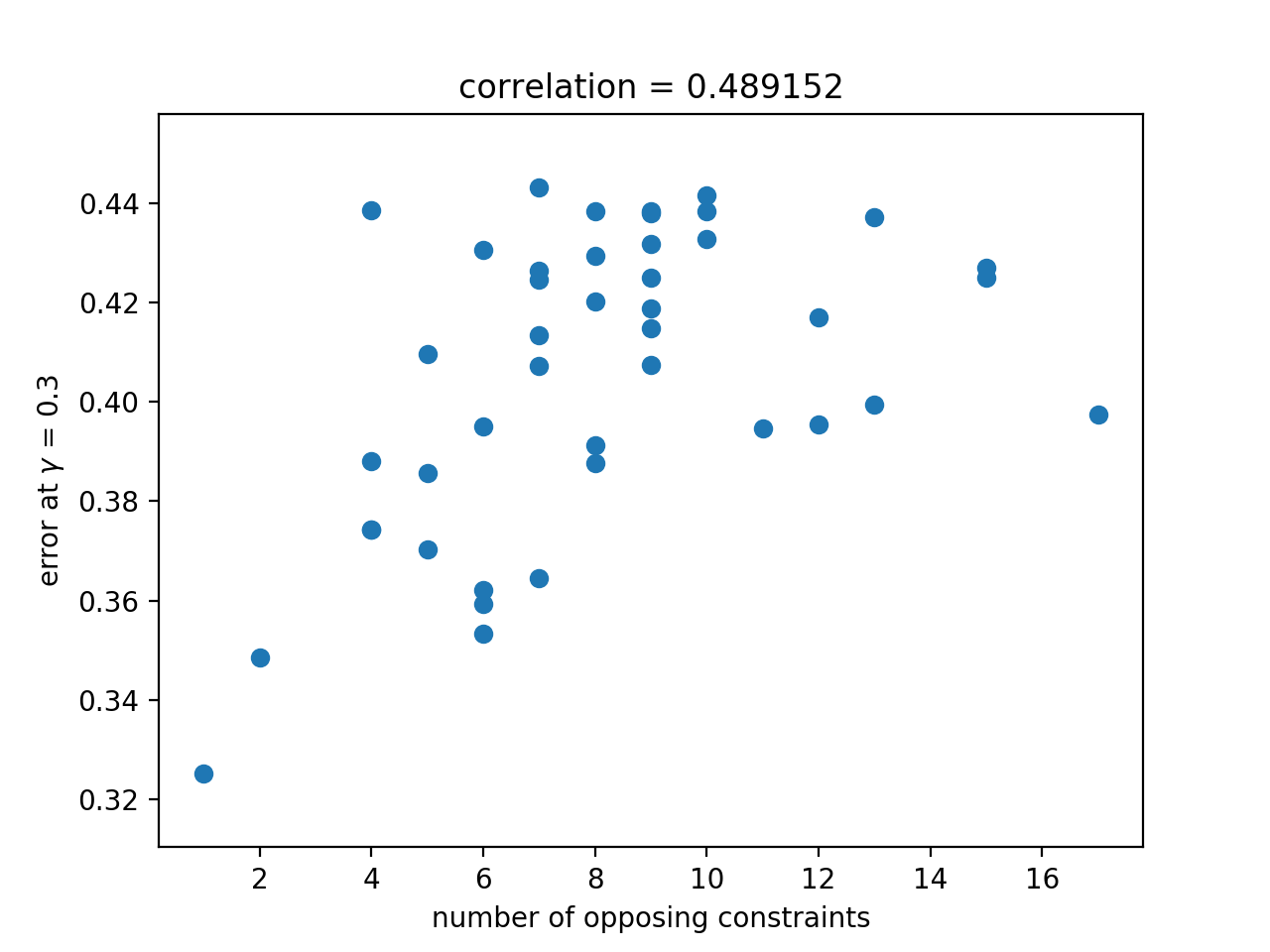}
	\caption{}
\end{subfigure}
\begin{subfigure}[t]{0.3\textwidth}
	\includegraphics[width=1.0\textwidth]{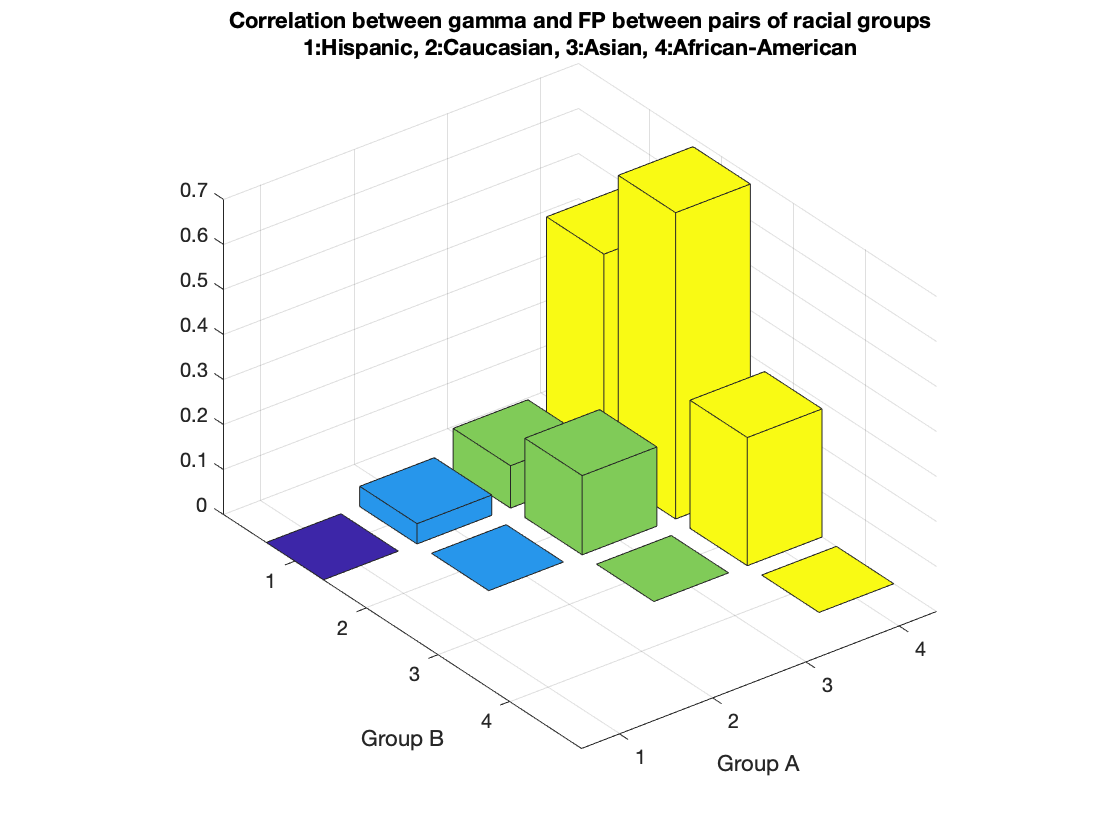}
	\subcaption{}\label{fig:correlation}
\end{subfigure}
\caption{
\label{fig:exp}
(a) Sample algorithm trajectory for a particular subject at various $\gamma$.
(b) Sample subjective fairness Pareto curves for a sample of subjects.
(c) Scatterplot of number of constraints specified and number of opposing constraints vs. error at $\gamma = 0.3$.
(d) Scatterplot of number of constraints where the true labels are different vs. error at $\gamma = 0.3$.
(e) Correlation between false positive rate difference and $\gamma$ for racial groups.
}
\end{figure*}

Since our algorithm relies on a learning heuristic for which worst-case guarantees are not possible,
the first empirical question is whether the algorithm converges rapidly on the behavioral data.
We found that it did so consistently; a typical example is Figure~\ref{fig:trajectory}, where
we show the trajectories of model error vs. fairness violation for a particular subject's data
for variable values of the input $\gamma$ (horizontal lines). After 1000 iterations, the algorithm has
converged to the optimal errors subject to the allowed $\gamma$.

Perhaps the most basic behavioral questions we might ask
involve the extent and nature of subject variability. For example,
do some subjects identify constraint pairs that are much harder to satisfy
than other subjects? And if so, what factors seem to account for such variation?

Figure~\ref{fig:pareto} shows that there is indeed considerable variation in subject difficulty.
For each of the 43 subjects, we have plotted
the error vs. fairness violation Pareto curves obtained
by varying $\gamma$ from 0 (pairs selected by subjects must have
identical probabilistic predictions of recidivism) to 1.0 (no fairness enforced whatsoever).
Since our model space is closed under probabilistic mixtures, the worst-case Pareto curve
is linear, obtained by all mixtures of the error-optimal model and random
predictions. Easier constaint sets are more convex.
We see in the figure that both extremes are exhibited behaviorally --- some subjects yield
linear or near-linear curves, while others permit huge reductions in unfairness for only
slight increases in error, and virtually all the possibilities in between are realized as well.
\footnote{The slight deviations from true convexity are due to approximate rather than exact convergence.}

Since each subject was presented with 50 random pairs and was free to constrain as many or
as few as they wished, it is natural to wonder if the variation in difficulty is explained
simply by the number of constraints chosen. In Figure~\ref{fig:numconstraints} we show a scatterplot
of the the number of constraints selected by a subject ($x$ axis)
versus the error obtained ($y$ axis) for $\gamma = 0.3$ (an intermediate value that exhibits considerable
variation in subject error rates) for all 43 subjects. While we see there is
indeed strong correlation (approximately 0.69), it is far from the case that the number of
constraints explains all the variability. For example, amongst subjects who selected approximately 16 constraints, the resulting error varies over a range of nearly 8\%, which is over 40\% of the range from the optimal error (0.32) to the worst fairness-constrained error (0.5). More surprisingly, when we consider only the `opposing' constraints, pairs of points with different true labels, the correlation (0.489) seems to be weaker. Enforcing a classifier to predict similarly on a pair of points with different true labels should increase the error, and yet, it is less correlated with error than the raw number of constraints.
This suggests that the variability in subject difficulty is due to the nature of the constraints themselves rather than their number or disagreement with the true labels.

It is also interesting to consider the collective force of the 1432 constraints selected by all
43 subjects together, which we can view as a ``fairness panel'' of sorts. Given that there are already
individual subjects whose constraints yield the worst-case Pareto curve, it is unsurprising that
the collective constraints do as well. But we can exploit the flexibility of our optimization
framework in
Equations (\ref{min}) through constraint (\ref{sum}), and let
$\gamma = 0.0$ and vary only $\eta$, thus giving the learner
 discretion in which subjects' constraints to discount or discard at a given budget
$\eta$. In doing so
we find that the unconstrained optimal error can be obtained while having the average (exact) pairwise constraint be violated by only roughly 25\%, meaning roughly that only 25\% of the collective constraints account for all the difficulty.

Finally, we can investigate the extent
to which behavioral subjective fairness notions align with
more standard statistical fairness definitions, such as equality of  false positive rates.
For instance, for each subject and a pair of racial groups, we take the absolute difference in false positive rates of the classifier at $\gamma \in \{0.0, 0.1, \dots, 1.0\}$ and calculate the correlation coefficient between realized values of $\gamma$ (which measure violation of subjective unfairness)
and the false positive rate differences. Figure~\ref{fig:correlation} shows the average correlation coefficient across subjects for each pair
of racial groups. We note that subjective fairness correlates with a smaller gap between the false positive rates across Caucasians and African Americans: but correlates substantially less for other pairs of racial groups.
% This suggests that the correlation is reflecting a trend in the fairness judgements of individuals, rather than simply being an artifact of e.g. mixing in a constant classification rule.

We leave a more complete investigation of our behavioral study for future work,
including the detailed nature of subject variability and
further comparison of behavioral subjective fairness to standard algorithmic fairness notions.

\paragraph*{Acknowledgements}
AR is supported in part by NSF grants AF-1763307, CNS-1253345, and an
Amazon Research Award.  ZSW is supported in part by an NSF grant FAI-1939606, a Google Faculty
Research Award, a J.P. Morgan Faculty Award, and a Facebook Research
Award. Part of this work was completed while ZSW was visiting the
Simons Institute for the Theory of Computing at UC Berkeley.

%\vfill
%\thispagestyle{empty}
%\setcounter{page}{0}
%\pagebreak

\bibliographystyle{plainnat}

\bibliography{./refs}

%\clearpage
%\pagenumbering{arabic}% resets `page` counter to 1
%\renewcommand*{\thepage}{A\arabic{page}}

\appendix
%\onecolumn
\section{Omitted details in Section~\ref{sec:erm}}
\subsection{Primal player's best response}

\begin{lemma}[Restatement of Lemma \ref{lem:primal_separate}]
    For fixed $\lambda, \tau$, the best response optimization for the primal player is separable, i.e.
    \[\argmin_{D,\alpha} \Lagr(D, \alpha, \lambda, \tau) = \argmin_{D} \Lagr_{\lambda, \tau}^{\rho_1}(D) \times \argmin_{\alpha} \Lagr_{\lambda, \tau}^{\rho_2}(\alpha),\]
    
    where 
    \[\Lagr_{\lambda, \tau}^{\rho_1}(D) = \err(h,D) + \sum_{(i,j) \in [n]^2} \lambda_{ij} \E\limits_{h\sim D} \left[h(x_i) - h(x_j)\right]\] and 
    \[\Lagr_{\lambda, \tau}^{\rho_2}(\alpha) = \sum_{(i,j) \in [n]^2} \lambda_{ij} \left(-\alpha_{ij}\right) + \tau \left( \frac{1}{|A|}\sum_{(i,j) \in [n]^2} w_{ij}\alpha_{ij} \right)\]

\end{lemma}
\begin{proof} First, note that $\alpha$ is not dependent on $D$ and vice versa. Thus, we may separate the optimization $\argmin_{D,\alpha} \Lagr$ as such:
    
    \begin{align*}
        &\argmin_{D,\alpha} \Lagr(D, \alpha, \lambda, \tau)\\
        &= \argmin_{D, \alpha} \err(D,S)
        + \sum_{(i,j) \in [n]^2} \lambda_{ij} \left( \E\limits_{h\sim D}\left[h(x_i) - h(x_j)\right] - \alpha_{ij} -\gamma \right) + \tau \left( \frac{1}{|A|}\sum_{(i,j) \in [n]^2} w_{ij} \alpha_{ij} - \eta \right)\\
        &= \argmin_{D} \err(D,S) + \sum_{(i,j) \in [n]^2} \lambda_{ij} \E\limits_{h\sim D} \left[h(x_i) - h(x_j)\right]
        \times \sum_{(i,j) \in [n]^2} \lambda_{ij} \left(-\alpha_{ij}\right) + \tau \left(\frac{1}{|A|}\sum_{(i,j) \in [n]^2} w_{ij}\alpha_{ij} \right)\\
        &= \argmin_{D} \Lagr_{\lambda, \tau}^{\rho_1}(D) \times \argmin_{\alpha}\Lagr_{\lambda, \tau}^{\rho_2}(\alpha)
    \end{align*}
    
\end{proof}

\subsection{Dual player's best response}

\begin{lemma}[Restatement of Lemma \ref{lem:dual_separate}]
For fixed $D$ and $\alpha$, the best response optimization for the dual player is separable, i.e.
\[\argmax_{\lambda \in \Lambda, \tau \in \Tau} \Lagr(D, \alpha, \lambda, \tau) = \argmax_{\lambda \in \Lambda} \Lagr_{D, \alpha}^{\psi_1}(\lambda) \times \argmax_{\tau \in \Tau} \Lagr_{D, \alpha}^{\psi_2}(\tau), \]
where 
\[\Lagr_{D, \alpha}^{\psi_1}(\lambda) = \sum_{(i,j) \in [n]^2} \lambda_{ij} \left( \E_{h\sim D}\left[h(x_i) - h(x_j)\right] - \alpha_{ij} -\gamma \right)\] 
and 
\[\Lagr_{D, \alpha}^{\psi_2}(\tau)=\tau \left( \frac{1}{|A|} \sum_{(i, j)\in [n]^2} w_{ij} \alpha_{ij} - \eta \right).\]
\end{lemma}
\begin{proof}
\begin{align*}
& \argmax_{\lambda \in \Lambda, \tau \in \Tau} \Lagr(D, \alpha, \lambda, \tau)\\
&=\argmax_{\lambda \in \Lambda, \tau \in \Tau} \E_{h\sim D}\left[\textit{err}(h, S)\right] + \sum_{(i,j) \in [n]^2} \lambda_{ij} \left( \E_{h\sim D}\left[h(x_i) - h(x_j)\right] - \alpha_{ij} -\gamma \right) +
  \tau \left( \frac{1}{|A|} \sum_{(i, j)\in [n]^2} w_{ij} \alpha_{ij} - \eta \right)\\
&=\argmax_{\lambda \in \Lambda} \sum_{(i,j) \in [n]^2} \lambda_{ij} \left( \E_{h\sim D}\left[h(x_i) - h(x_j)\right] - \alpha_{ij} -\gamma \right) \times \argmax_{\tau \in \Tau}
  \tau \left( \frac{1}{|A|} \sum_{(i, j)\in [n]^2} w_{ij} \alpha_{ij} - \eta \right)\\  
  &= \argmax_{\lambda \in \Lambda} \Lagr_{D, \alpha}^{\psi_1}(\lambda) \times \argmax_{\tau \in \Tau} \Lagr_{D, \alpha}^{\psi_2}(\tau)
\end{align*}
\end{proof}

\begin{algorithm}
\caption{Best Response, $BEST_{\psi}(D, \alpha)$, for the dual player}
\begin{algorithmic}[ht!]
	\State \textbf{Input:} training examples $S=\{ x_i, y_i \}_{i=1}^n,$ $D \in \Delta(H)$, $\alpha \in [0,1]^{n^2}$
    	\State $\lambda = 0 \in \mathbb{R}^{n^2}$
	\State $(i^*, j^*) = \argmax_{(i,j) \in [n]^2} \E_{h\sim D}\left[h(x_i) - h(x_j)\right] -\alpha_{ij}-\gamma$
	\If {$\E_{h\sim D}\left[h(x_{i^*}) - h(x_{j^*})\right] -\alpha_{i^*j^*}-\gamma \le 0$}
		\State $\lambda_{i^*j^*} = C_{\lambda}$
	\EndIf
	\State set $\tau = \begin{cases} 
      0 &  \frac{1}{|A|}\sum_{(i, j)\in [n]^2} w_{ij} \alpha_{ij} - \eta \le 0\\
      C_{\tau} & o.w. 
   \end{cases}$
	
	\State \textbf{Output:} $\lambda, \tau$
\end{algorithmic}
\end{algorithm}

\begin{lemma}\label{lem:dual_lambda}
For fixed $D$ and $\alpha$, the output $\lambda$ from $BEST_{\psi}(D, \alpha)$ minimizes $\Lagr_{D, \alpha}^{\psi_1}$
\end{lemma}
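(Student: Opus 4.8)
The plan is to treat the best response as a linear program over a polytope and to prove the claimed optimality by vertex enumeration. Writing $c_{ij} := \E_{h \sim D}[h(x_i) - h(x_j)] - \alpha_{ij} - \gamma$, which is a fixed constant once $D$ and $\alpha$ are fixed, the objective becomes $\Lagr_{D,\alpha}^{\psi_1}(\lambda) = \sum_{(i,j) \in [n]^2} \lambda_{ij}\, c_{ij}$, a linear functional of the decision vector $\lambda$. The feasible region $\Lambda = \{\lambda \in \cR_+^{n^2} : \norm{\lambda}_1 \le C_\lambda\}$ is the intersection of the nonnegative orthant with an $\ell_1$-ball of radius $C_\lambda$; it is a compact convex polytope whose vertices are precisely the origin $\mathbf{0}$ and the scaled standard basis vectors $C_\lambda e_{ij}$ for $(i,j) \in [n]^2$.

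First I would invoke the fact that a linear functional over a compact polytope attains its optimum at a vertex, which reduces the infinite optimization to comparing the finitely many vertex values $\Lagr_{D,\alpha}^{\psi_1}(\mathbf{0}) = 0$ and $\Lagr_{D,\alpha}^{\psi_1}(C_\lambda e_{ij}) = C_\lambda\, c_{ij}$. To prove that the output \emph{minimizes} $\Lagr_{D,\alpha}^{\psi_1}$ over $\Lambda$, I would then argue that the minimizing vertex is the origin exactly when every $c_{ij} \ge 0$ (so that no single-coordinate placement can drive the objective below $0$), and is otherwise the scaled basis vector $C_\lambda e_{i^*j^*}$ supported on the extremal coordinate $(i^*,j^*)$, with objective value $C_\lambda\, c_{i^*j^*}$.

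Next I would match this optimal vertex to the concrete output of $BEST_\psi(D,\alpha)$ by following the two branches of its conditional: the branch that activates $\lambda_{i^*j^*} = C_\lambda$ and the branch that leaves $\lambda = \mathbf{0}$. The verification is a short case analysis on the sign of the extremal coefficient $c_{i^*j^*}$, checking in each case that the vertex the algorithm lands on coincides with the one identified in the previous paragraph, and that no other vertex attains a strictly smaller objective value.

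The main obstacle I anticipate is reconciling the \emph{direction} of optimization with the selection rule hard-coded into $BEST_\psi$. Because this lemma lives inside the min-max (Lagrangian) formulation in which the dual player controls $\lambda$, I expect the delicate point to be tracking the sign conventions that relate the coordinate chosen by the $\argmax$ step, and its activation test $c_{i^*j^*} \le 0$, to the vertex that genuinely optimizes $\Lagr_{D,\alpha}^{\psi_1}$ in the sense asserted by the statement. All of the routine content (linearity, the vertex description of $\Lambda$, and the reduction to a finite comparison) is standard; the real care is in confirming that the algorithm's branch structure selects exactly the vertex claimed, and I would devote the bulk of the argument to that reconciliation.
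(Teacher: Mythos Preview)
Your approach is the same as the paper's: the paper gives a one-sentence argument that $\Lagr_{D,\alpha}^{\psi_1}$ is linear in $\lambda$ and the feasible region is the nonnegative $\ell_1$-ball, so the optimum puts all the mass on the coordinate with the largest coefficient. Your vertex-enumeration argument is simply a more explicit version of this.

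The obstacle you anticipated is real, but it is an artifact of typos in the paper rather than a mathematical subtlety you need to work around. First, the word ``minimizes'' in the lemma statement should be ``maximizes'': the dual player maximizes the Lagrangian, and the separation lemma immediately preceding this one (Lemma~\ref{lem:dual_separate}) is stated with $\argmax_{\lambda}$. Second, the branch test in the $BEST_\psi$ pseudocode should read $c_{i^*j^*} > 0$, not $c_{i^*j^*} \le 0$; the proof of Theorem~\ref{thm:approx-equilibrium} in the body uses exactly this corrected version (``if $\ldots > 0$, then $\lambda^*_{i^*j^*} = C_\lambda$ \ldots\ else it's just a zero vector''). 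Taken literally, the algorithm as printed does \emph{not} minimize (it selects the argmax coordinate, not the argmin), nor does it maximize (the inequality is flipped), so no amount of careful case analysis will reconcile the printed pseudocode with the printed lemma statement. Once you correct both typos, the verification you sketched goes through immediately: if $\max_{ij} c_{ij} > 0$ the maximizing vertex is $C_\lambda e_{i^*j^*}$, and otherwise it is $\mathbf{0}$, which is precisely what the (intended) algorithm returns.
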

\begin{proof}
Because $\Lagr_{D, \alpha}^{\psi_1}$ is linear in terms of $\lambda$ and the feasible region is the non-negative orthant bounded by 1-norm, the optimal solution must include putting all the weight to the pair $(i,j)$ where $\E_{h\sim D}[h(x_i) - h(x_j)-\alpha_{ij}]$ is maximized.  
\end{proof}

\begin{lemma}\label{lem:dual_tau}
For fixed $D$ and $\alpha$, the output $\tau$ from $BEST_{\psi}(D, \alpha)$ minimizes $\Lagr_{D, \alpha}^{\psi_2}$
\end{lemma}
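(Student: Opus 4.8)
The plan is to follow the same one-variable linear-optimization argument used for the primal slack variables in Lemma~\ref{lem:primal_alpha} and for the dual weights in Lemma~\ref{lem:dual_lambda}. With $D$ and $\alpha$ held fixed, $\Lagr_{D,\alpha}^{\psi_2}$ is a linear function of the single scalar $\tau$ ranging over the compact interval $\Tau = [0, C_\tau]$, and a linear function on an interval always attains its optimum at one of the two endpoints. The entire argument therefore reduces to identifying the optimizing endpoint and checking that it coincides with the value returned by $BEST_\psi(D,\alpha)$.

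Concretely, I would first rewrite $\Lagr_{D,\alpha}^{\psi_2}(\tau) = c\,\tau$, where $c := \frac{1}{|A|}\sum_{(i,j)\in[n]^2} w_{ij}\alpha_{ij} - \eta$ is constant in $\tau$ once $D$ and $\alpha$ are fixed. Since the objective is linear with slope $c$, the optimizing endpoint over $[0,C_\tau]$ is governed entirely by the sign of $c$: one endpoint is selected when $c$ has one sign and the other endpoint when $c$ has the opposite sign. I would then read off the rule for $\tau$ inside $BEST_\psi$, which sets $\tau = 0$ when $c \le 0$ and $\tau = C_\tau$ when $c > 0$, and run a two-case verification---comparing the objective value at the selected endpoint against $c\,\tau$ for an arbitrary $\tau \in [0,C_\tau]$---to confirm that the returned $\tau$ attains the optimum of $\Lagr_{D,\alpha}^{\psi_2}$ asserted in the statement. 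The degenerate case $c = 0$, where every $\tau$ yields objective value $0$, is settled by the tie-break built into the algorithm and is harmless.

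I do not expect a substantive obstacle here, since this is a linear program in a single bounded scalar variable. The only point demanding care is the bookkeeping of the sign of $c$, matching it to the correct endpoint, and treating the boundary case $c = 0$ consistently with the algorithm's convention. Combined with Lemma~\ref{lem:dual_lambda} and the separability of the dual best response established in Lemma~\ref{lem:dual_separate}, this completes the verification that $BEST_\psi(D,\alpha)$ is the exact best response for the dual player in the $\tau$ coordinate.
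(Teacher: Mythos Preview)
Your proposal is correct and follows exactly the same approach as the paper: both arguments observe that $\Lagr_{D,\alpha}^{\psi_2}(\tau)=c\tau$ is linear on the interval $[0,C_\tau]$, so the optimum is attained at an endpoint determined by the sign of $c$. The paper's proof is a single sentence to this effect, whereas you spell out the case analysis in more detail, but the substance is identical.
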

\begin{proof}
Because $\Lagr_{D, \alpha}^{\psi_2}$ is linear in terms of $\tau$, the optimal solution is trivially to set $\tau$ at either $C_{\tau}$ or 0 depending on the sign.
\end{proof}

\subsection{No-regret dynamics}
\begin{algorithm}
\caption{No-Regret Dynamics}\label{alg:no-regret}
\begin{algorithmic}[ht!]
    \State \textbf{Input:} training examples $\{ x_i, y_i \}_{i=1}^n,$ bounds $C_\lambda$ and $C_\tau$, time horizon $T$, step sizes $\mu_\lambda$ and $\{{\mu_\tau^t}\}^{t=1}_{T}$, 
    \State Set $\theta^{0}_1 = \mathbf{0} \in \mathbb{R}^{n^2}$
    \State Set $\tau^0 = 0$
    \For{$t = 1, 2, \dots, T$} 
        \State Set $\lambda^t_{ij} = C_{\lambda} \frac{\exp{\theta^{t-1}_{ij}}}{1 + \sum_{i',j' \in [n]^2} \exp{\theta^{t-1}_{i'j'}}}$ for all pairs $(i,j) \in [n]^2$
        \State Set $\tau^t = \proj_{[0, C_\tau]}\left(\tau^{t-1} + \mu^t_{\tau} \left(\frac{1}{|A|} \sum_{i,j} w_{ij} \alpha^{t-1}_{ij} - \eta\right)\right)$
        
        \State $D^t, \alpha^t \leftarrow \textnormal{BEST}_{\rho}(\lambda^t, \tau^t)$
        
%        
%        \State $\overline{D}_t \leftarrow \frac{1}{t} \sum_{t'=1}^t D_{t'}$
%        \State $\overline{\alpha}_t \leftarrow \frac{1}{t} \sum_{t'=1}^t \alpha_{t'}, \quad \Lagr_{\max} \leftarrow \Lagr \left( \overline{D}_t, \overline{\alpha}_t,  \textnormal{BEST}_{\lambda, \tau} (\overline{D}_t, \overline{\alpha}_t) \right)$
%        \State $\overline{\lambda}_t \leftarrow \frac{1}{t} \sum_{t'=1}^t \lambda_{t'}$ 
%        \State $\overline{\tau}_t \leftarrow \frac{1}{t} \sum_{t'=1}^t \tau_{t'}, \quad \Lagr_{\min} \leftarrow \Lagr \left( \textnormal{BEST}_{D, \alpha} ( \overline{\lambda}_t, \overline{\tau}_t), \overline{\lambda}_t, \overline{\tau}_t \right)$
%        \State $\nu_t \leftarrow \max \{ \Lagr(\overline{D_t}, \overline{\alpha_t}, \overline{\lambda_t}, \overline{\tau_t}) - \Lagr_{\min}, \quad \Lagr_{\max} - \Lagr(\overline{D_t}, \overline{\alpha_t}, \overline{\lambda_t}, \overline{\tau_t} \}$
%        \State
%        \If {$\nu_t \leq \nu$}
%            \State Return $(\overline{D}_t, \overline{\alpha}_t, \overline{\lambda}_t, \overline{\tau}_t)$
%        \EndIf
	\For {$(i,j) \in [n]^2$}
		\State $\theta^{t}_{ij} = \theta^{t-1}_{ij} + \mu^{t-1}_{\lambda} \left( \E_{h\sim D^t}\left[h(x_i) - h(x_j)\right] - \alpha^t_{ij} - \gamma \right)$
	\EndFor
        \State 
    \EndFor
    \State \textbf{Output:} $\frac{1}{T}\sum_{t=1}^T D^t$
\end{algorithmic}
\end{algorithm}

\begin{theorem}[\cite{freund1996game}]\label{thm:freund1996game}
Let $(D^1, \alpha^1), \dots, (D^T, \alpha^T)$ be the primal player's sequence of actions, and $(\lambda^1, \tau^1), \dots, (\lambda^T, \tau^T)$ be the dual player's sequence of actions. Let $\bar{D} = \frac{1}{T}\sum_{t=1}^T D^t$, $\bar{\alpha} = \frac{1}{T}\sum_{t=1}^T \alpha^t$, $\bar{\lambda} = \frac{1}{T}\sum_{t=1}^T \lambda^t$, and $\bar{\tau} = \frac{1}{T} \sum_{t=1}^T \tau^t$. Then, if the regret of the dual player satisfies 

\[
\max_{\lambda \in \Lambda, \tau \in \Tau} \sum_{t=1}^T \Lagr\left(D^t, \alpha^t, \lambda^t, \tau^t\right) - \sum_{t=1}^T \Lagr\left(D^t, \alpha^t, \lambda^t, \tau^t\right)\le \xi_\psi T,
\]
and the primal player best responds in each round ($D^t, \alpha^t = \argmax_{D \in \Delta(H), \alpha \in [0,1]^{n^2}} \Lagr\left(D, \alpha, \lambda^t, \tau^t\right)$),
then $(\bar{D}, \bar{\alpha}, \bar{\lambda}, \bar{\tau})$ is an $\xi_\psi$-approximate solution 
\end{theorem}

\iffalse
\begin{remark}
If the primal learner's approximate best response satisfies
\[
\sum_{t=1}^T \Lagr\left(D^t, \alpha^t, \lambda^t, \tau^t\right) - \min_{D \in \Delta(H), \alpha \in [0,1]^{n^2}} \sum_{t=1}^T \Lagr\left(D, \alpha, \lambda^t, \tau^t\right) \le \xi_\rho T
\] along with dual player's regret of $\xi_\rho T$, then $\left(\bar{D}, \bar{\alpha}, \bar{\lambda}, \bar{\tau}\right)$ is an $\left(\xi_\rho + \xi_\psi\right)$-approximate solution 
\end{remark}

\begin{theorem}\label{thm:approx-equilibrium}
Let $\left(\hat{D}, \hat{\alpha}, \hat{\lambda}, \hat{\tau}\right)$ be a $v$-approximate solution to the Lagrangian problem. More specifically, 
\[\Lagr\left(\hat{D}, \hat{\alpha}, \hat{\lambda}, \hat{\tau}\right) \le \min_{D \in \Delta(\mathcal{H}), \alpha \in [0,1]^{n^2}} \Lagr\left(D, \alpha, \hat{\lambda}, \hat{\tau}\right) + v,\]
and 
\[\Lagr(\hat{D}, \hat{\alpha}, \hat{\lambda}, \hat{\tau}) \ge \max_{\lambda \in \Lambda, \tau \in \Tau} \Lagr\left(\hat{D}, \hat{\alpha}, \lambda, \tau\right) - v.\]

Then, $err\left(\hat{D}, S\right) \le OPT + 2v$. And as for the constraints, $\E_{h\sim \hat{D}}\left[h(x_i) - h(x_j)\right] \le \hat{\alpha}_{ij} + \gamma + \frac{1+2v}{C_\lambda}, \forall (i,j) \in [n]^2$ and
$\frac{1}{|A|}\sum_{(i, j)\in [n]^2} \hat{w}_{ij} \hat{\alpha}_{ij}  \le \eta + \frac{1+2v}{C_\tau}$.
\end{theorem}
\input{approx-equilibrium-thm-proof}
\fi

\subsubsection{Omitted proof of theorem \ref{thm:algorithm-guarantee}}
\begin{proof}[proof of theorem \ref{thm:algorithm-guarantee}]
Observe that
\[
\Lagr(D, \alpha, \lambda, \tau) = err(D,S) + \Lagr_{D, \alpha}^{\psi_1}(\lambda) + \Lagr_{D, \alpha}^{\psi_2}(\tau)
\]

By how we constructed $\Lagr_{D, \alpha}^{\psi_1}$ and $\Lagr_{D, \alpha}^{\psi_2}$, combining Lemma \ref{lem:tau-no-regret} and \ref{lem:lambda-no-regret} yields

\begin{align*}
&\max_{\lambda \in \Lambda, \tau \in \Tau} \sum_{t=1}^T \Lagr\left(D^t, \alpha^t, \lambda^t, \tau^t\right) - \sum_{t=1}^T \Lagr\left(D^t, \alpha^t, \lambda^t, \tau^t\right)\\
&=\max_{\tau \in \Tau} \sum_{t=1}^T \Lagr_{D^t, \alpha^t}^{\psi_2}(\tau)  - \sum_{t=1}^T \Lagr_{D^t, \alpha^t}^{\psi_2}\left(\tau^t\right) + \max_{\lambda \in \Lambda} \sum_{t=1}^T \Lagr_{D^t, \alpha^t}^{\psi_1}(\lambda)  - \sum_{t=1}^T \Lagr_{D^t, \alpha^t}^{\psi_1}\left(\lambda^t\right)\\
&\le \xi_\psi T,
\end{align*}

where $\xi_\psi = \frac{2C_{\lambda}  \sqrt{T \log n} + C_{\tau}\sqrt{T}}{T}$.

Then, theorem \ref{thm:freund1996game} tells us that $\bar{D}, \bar{\alpha}, \bar{\lambda}, \bar{\alpha}$ form a $\xi_\psi$-approximate equilibrium, where $\bar{D} = \frac{1}{T}\sum_{t=1}^T D^t$, $\bar{\alpha} = \frac{1}{T}\sum_{t=1}^T \alpha^t$, $\bar{\lambda} = \frac{1}{T}\sum_{t=1}^T \lambda^t$, and $\bar{\tau} = \frac{1}{T} \sum_{t=1}^T \tau^t$. And finally, with $T= \left(\frac{2C_{\lambda}\sqrt{\log(n)} +C_{\tau}}{v}\right)^2$ results in $\xi_\psi = \nu$, theorem \ref{thm:approx-equilibrium} gives 
\[err(\hat{D}, S) \le \min_{(D,\alpha) \in \Omega(S, \hat{w}, \gamma, \eta)} \err(D, S) + 2\nu.\] 

 And as for the constraints, \[\E_{h\sim \hat{D}}\left[h(x_i) - h(x_j)\right] \le \hat{\alpha}_{ij} + \gamma + \frac{1+2\nu}{C_\lambda}, \forall (i,j) \in [n]^2\] and
\[\frac{1}{|A|}\sum_{(i, j)\in [n]^2} \hat{w}_{ij} \hat{\alpha}_{ij}  \le \eta + \frac{1+2v}{C_\tau}.\]
\end{proof}

\section{Generalization}
\subsubsection{Error}
\begin{theorem}[\cite{kearns1994introduction}]\label{thm:error-generalization}
Fix some hypothesis class $\mathcal{H}$ and distribution $\cP$. Let $S \sim P^n$ be a dataset consisting of $n$ examples $\{x_i, y_i\}_{i=1}^n$ sampled i.i.d. from $\cP$. Then, for any $0<\delta<1$, with probability $1-\delta$, for every $h \in \mathcal{H}$, we have
\[
\left\vert err(h, \cP) - err(h, S)\right\vert \le O\left( \sqrt{\frac{VCDIM(\mathcal{H})+ log(\frac{1}{\delta})}{n}}\right)
\]
\end{theorem}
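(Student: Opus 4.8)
The plan is to prove this as the textbook VC uniform-convergence statement: the fairness constraints are irrelevant here, since the inequality is a property of $\mathcal{H}$ and $\cP$ alone (it holds for \emph{every} $h \in \mathcal{H}$, hence in particular for every $h$ in the support of the distributions our algorithm outputs). First I would pass to the induced $0/1$-loss class $\mathcal{L}_{\mathcal{H}} = \{(x,y)\mapsto \ind[h(x)\neq y] : h\in\mathcal{H}\}$ and observe that, because $y\in\{0,1\}$, composing with the ``$\neq y$'' indicator never increases shattering, so $\mathrm{VCDIM}(\mathcal{L}_{\mathcal{H}}) = \mathrm{VCDIM}(\mathcal{H}) =: d$. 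It then suffices to bound the one-sided uniform deviation $\sup_{h\in\mathcal{H}}\bigl(\err(h,\cP)-\err(h,S)\bigr)$ of this family of $\{0,1\}$-valued functions, and union-bound over the two signs.

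The core is the classical three-step argument. (i) \emph{Symmetrization}: introduce a ghost sample $S'\sim\cP^n$ and show $\Pr\bigl[\sup_h(\err(h,\cP)-\err(h,S))>\epsilon\bigr]\le 2\Pr\bigl[\sup_h(\err(h,S')-\err(h,S))>\epsilon/2\bigr]$ for $n\gtrsim 1/\epsilon^2$, reducing everything to the behaviour of $\mathcal{H}$ on the fixed $2n$-point sample $S\cup S'$. (ii) \emph{Growth-function bound}: on any fixed set of $2n$ points the number of distinct loss patterns realized by $\mathcal{L}_{\mathcal{H}}$ is at most the growth function, which by Sauer--Shelah is at most $(2en/d)^d$. (iii) \emph{Union bound plus Hoeffding}: condition on $S\cup S'$, introduce a uniformly random permutation swapping the two halves (or Rademacher signs), apply a Hoeffding bound for each of the $\le (2en/d)^d$ patterns, and take a union bound. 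This yields that with probability $1-\delta$, $\sup_{h}\lvert\err(h,\cP)-\err(h,S)\rvert = O\!\bigl(\sqrt{(d\log(n/d)+\log(1/\delta))/n}\bigr)$.

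To land \emph{exactly} on the stated bound $O\bigl(\sqrt{(d+\log(1/\delta))/n}\bigr)$ — i.e. without the spurious $\log n$ factor — I would instead route step (iii) through Rademacher complexity: standard symmetrization gives $\Expectation\,\sup_h\lvert\err(h,\cP)-\err(h,S)\rvert \le 2\mathcal{R}_n(\mathcal{L}_{\mathcal{H}})$, and Dudley's entropy integral (chaining) combined with the Haussler $L_2$ covering-number bound for VC classes gives $\mathcal{R}_n(\mathcal{L}_{\mathcal{H}}) = O(\sqrt{d/n})$ with no logarithmic factor; a bounded-differences (McDiarmid) concentration of $\sup_h\lvert\err(h,\cP)-\err(h,S)\rvert$ around its expectation then contributes the additive $O(\sqrt{\log(1/\delta)/n})$, and summing the two pieces (and absorbing the two-sided union bound into the constant) gives the theorem. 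I expect the only non-routine point to be the chaining step that removes the $\log n$; if one is content with the slightly weaker $O\bigl(\sqrt{(d\log(n/d)+\log(1/\delta))/n}\bigr)$, the elementary symmetrization-plus-Sauer--Shelah argument already suffices, so I would present that version and either remark on the chaining refinement or simply defer to \cite{kearns1994introduction} for the clean form.
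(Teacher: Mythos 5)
The paper does not prove this statement at all: it is imported verbatim as a known textbook result, attributed to \cite{kearns1994introduction}, and used as a black box (the surrounding text explicitly notes that error generalization ``follows from the standard VC-dimension bound'' and devotes its effort to the fairness-loss generalization instead). Your reconstruction is the correct standard argument, and you rightly flag the one subtle point: the elementary symmetrization--Sauer--Shelah--Hoeffding route yields $O\bigl(\sqrt{(d\log(n/d)+\log(1/\delta))/n}\bigr)$, and matching the stated $\log$-free form requires chaining (Dudley plus Haussler's covering bound) or an equivalent refinement. Either version is an acceptable proof of what the paper actually relies on, so there is no gap to report.
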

\subsubsection{Fairness Loss}
At a high level, our argument proceeds as follows: using McDiarmid's inequality, for any \emph{fixed} hypothesis, its empirical fairness loss concentrates around its expectation. This argument extends to an infinite family of hypotheses with bounded VC-dimension via the standard two-sample trick, together with Sauer's lemma: the only catch is that we need to use a variant of McDiarmid's inequality that applies to sampling without replacement. However, proving that the fairness loss for each fixed hypothesis $h$ concentrates around its expectation is not sufficient to obtain the same result for arbitrary distributions over hypotheses, because the difference between a randomized classifier's fairness loss and its expectation is a non-convex function of the mixture weights. To circumvent this issue, we show that with respect to fairness loss, there is an $\epsilon$-net consisting of sparse distributions over hypotheses. Once we apply Sauer's lemma and the two-sample trick, there are only finitely many such distributions, and we can union bound over them. 

We begin by stating the standard version of McDiarmid's inequality:

\begin{theorem}[McDiarmid's Inequality]
Suppose $X_1, \dots, X_n$ are independent and $f$ satisfies
\[\sup_{x_1, \dots, x_n, \hat{x}_i} \left\vert f(x_1, \dots, x_n) - f(x_1, \dots, x_{i-1}, \hat{x}_i, x_{i+1}, \dots, x_n)\right\vert \le c_i.\]
Then, for any $\epsilon > 0$,
\[
\Pr_{X^1, \dots, X^n }\left(\left\vert f(X_1, \dots, X_n) - \E_{X_1, \dots, X_n}\left[f(X_1, \dots, X_n)\right]\right\vert \ge \epsilon \right)\le 2\exp\left(-\frac{2\epsilon^2}{\sum_{i=1}^n c_i^2}\right)
\]
\end{theorem}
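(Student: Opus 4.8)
The plan is to prove McDiarmid's inequality by the classical route: build the Doob martingale associated with $f$, show its increments are conditionally bounded in intervals of length $c_i$, and then run the Azuma--Hoeffding exponential-moment argument. Since this is a textbook result, I would either cite it or keep the proof short, with only the conditional-boundedness step spelled out in detail.

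First I would set up the Doob martingale. Let $\mathcal{F}_i = \sigma(X_1,\dots,X_i)$, put $Z_0 = \E[f(X_1,\dots,X_n)]$, and for $i=1,\dots,n$ let $Z_i = \E[f(X_1,\dots,X_n)\mid \mathcal{F}_i]$, so that $Z_n = f(X_1,\dots,X_n)$ and $(Z_i)_{i=0}^n$ is a martingale with increments $D_i := Z_i - Z_{i-1}$ satisfying $\E[D_i\mid \mathcal{F}_{i-1}] = 0$.

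The key step — and the one I expect to require the most care — is to show that, conditionally on $\mathcal{F}_{i-1}$, the increment $D_i$ is supported on an interval of length at most $c_i$. Using independence of the $X_j$, define $\phi_i(x_1,\dots,x_i) = \E_{X_{i+1},\dots,X_n}[f(x_1,\dots,x_i,X_{i+1},\dots,X_n)]$, so that $Z_i = \phi_i(X_1,\dots,X_i)$ and $Z_{i-1} = \E_{X_i}[\phi_i(X_1,\dots,X_{i-1},X_i)\mid\mathcal{F}_{i-1}]$. The bounded-differences hypothesis on $f$, after swapping only the $i$-th coordinate inside $f$ and pulling the supremum through the expectation over the remaining coordinates, gives $\sup_{a,b}\bigl|\phi_i(x_1,\dots,x_{i-1},a) - \phi_i(x_1,\dots,x_{i-1},b)\bigr| \le c_i$. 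Hence conditioned on $\mathcal{F}_{i-1}$ the variable $Z_i$ lives in an interval of length $\le c_i$ and $Z_{i-1}$ is its conditional mean, so $D_i$ is conditionally mean-zero with conditional range $\le c_i$.

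Finally I would run the standard Chernoff computation. For $s>0$, conditioning on $\mathcal{F}_{n-1}$ and applying Hoeffding's lemma to $D_n$ gives $\E[e^{sD_n}\mid\mathcal{F}_{n-1}] \le e^{s^2 c_n^2/8}$; iterating over $i = n, n-1, \dots, 1$ yields $\E[e^{s(Z_n - Z_0)}] \le \exp\!\bigl(s^2 \sum_{i=1}^n c_i^2 / 8\bigr)$. Markov's inequality then bounds $\Pr(Z_n - Z_0 \ge \epsilon)$ by $\exp\!\bigl(-s\epsilon + s^2\sum_i c_i^2/8\bigr)$, and choosing $s = 4\epsilon/\sum_i c_i^2$ gives $\exp\!\bigl(-2\epsilon^2/\sum_i c_i^2\bigr)$. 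Applying the same argument to $-f$ and union-bounding the two tails produces the factor of $2$ in the stated bound. Everything after the conditional-boundedness step is the routine Azuma--Hoeffding argument, so that is the only part I would present in full.
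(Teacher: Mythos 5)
Your proof is correct: this is the standard Doob-martingale/Azuma--Hoeffding argument, and the details (conditional range $c_i$ via independence, Hoeffding's lemma, the choice $s = 4\epsilon/\sum_i c_i^2$, and the union bound over the two tails) all check out. The paper states McDiarmid's inequality as a known result and gives no proof of its own, so there is nothing to compare against; citing it or presenting exactly the argument you outline would both be appropriate.
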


\begin{lemma}\label{lem:sample_concentration}
Fix a randomized hypothesis $D \in \Delta\mathcal{H}$. Over the randomness of $S \sim \cP^n$, we have
\[
\Pr_{S \sim \cP^n}\left( \left\vert\Pi_{D, w, \gamma}(S \times S) - \E_{S}\left[\Pi_{D, w, \gamma}(S \times S)\right]\right\vert \ge \epsilon  \right)\le  2\exp\left(-2n\epsilon^2\right)
\]
\end{lemma}
\begin{proof}
Define a slightly modified fairness loss function that depends on each instance instead of a pair.
\[
\Pi'_{D,w,\gamma}\left(x_1, x_2, \dots, x_{n}\right) = \frac{1}{{n}^2}\sum_{(i,j) \in [n]^2} \Pi_{D,w,\gamma}\left((x_i, x_j)\right).
\]
Note that $\Pi'_{D,w,\gamma}(x_1, \dots, x_n) = \Pi_{D,w,\gamma}(S \times S)$. The sensitivity of $\Pi'_{D,w,\gamma}(x_1, x_2, \dots, x_{n})$ is $\frac{1}{n}$, so applying McDiarmid's inequality yields the above concentration.
\end{proof}

\begin{theorem}\label{thm:fairness-loss-all-pairs}
If $n \ge \frac{2\ln(2)}{\epsilon^2}$,
\[
\Pr_{S}\left(\sup_{D \in \Delta\mathcal{H}}\left\vert\Pi_{D,w,\gamma}(S \times S) - \E_{x,x'}\left[\Pi_{D,w,\gamma}(x,x')\right]\right\vert > \epsilon \right) \le 8 \cdot \left(\frac{e\cdot 2n}{d}\right)^{dk} \exp\left(\frac{-n\epsilon^2}{32}\right)
\]
where $d$ is the VC-dimension of $\mathcal{H}$, and $k=\frac{\ln(2n^2)}{8\epsilon^2} + 1$.
\end{theorem}
\begin{proof}
%This proof is a slightly modified version of the proof for VC-inequality; it uses McDiarmid's inequality instead of Hoeffding's in order to circumvent dependency between pairs and also incorporates a net argument in order to argue for uniform convergence over $\Delta\mathcal{H}$ and not just over $\mathcal{H}$.
First, by linearity of expectation, we note that $\E_{S}\left[\Pi_{D,w,\gamma}(S \times S)\right]  = \E_{x,x'}\left[\Pi_{D,w,\gamma}(x,x')\right]$. Given $S$, let $D^*_S$ be some randomized classifier such that $\left\vert\Pi_{D^*_S,w,\gamma}(S \times S) - \E_{x,x'}\left[\Pi_{D^*_S,w,\gamma}(x,x')\right]\right\vert > \epsilon$; if such hypothesis does not exist, let it be some fixed hypothesis in $\mathcal{H}$. We now use standard symmetrization argument, which allows us to bound the difference between the fairness loss of our sample $S$ and that of another independent `ghost' sample $S' = (x'_1, \dots, x'_n)$ instead of bounding the difference between the empirical fairness loss and its expected fairness loss.
\begin{align*}
&\Pr_{S \sim \cP^n,S' \sim \cP^n} \left(\sup_{D \in \Delta\mathcal{H}}\left\vert\Pi_{D,w,\gamma}(S \times S) - \Pi_{D,w,\gamma}(S' \times S') \right\vert> \frac{\epsilon}{2} \right)\\
&\ge\Pr_{S,S'} \left(\left\vert \Pi_{D^*_S,w,\gamma}(S \times S) - \Pi_{D^*_S,w,\gamma}(S' \times S')\right\vert > \frac{\epsilon}{2} \right)\\
&\ge \Pr_{S, S'} \left(\left\vert \Pi_{D^*_S,w,\gamma}(S \times S) - \E_{x,x'}\left[\Pi_{D^*_S,w,\gamma}(x,x')\right] \right\vert > \epsilon \text{ and } \left\vert \Pi_{D^*,w,\gamma}(S' \times S') - \E_{x,x'}\left[\Pi_{D^*,w,\gamma}(x,x')\right]\right\vert \le \frac{\epsilon}{2} \right) \\
&= \E_{S,S'}\left[ \ind\left(\left\vert \Pi_{D^*_S,w,\gamma}(S \times S) - \E_{x,x'}\left[\Pi_{D^*_S,w,\gamma}(x,x')\right]\right\vert > \epsilon\right) \cdot \ind\left(\left\vert\Pi_{D^*,w,\gamma}(S' \times S') - \E_{x,x'}\left[\Pi_{D^*,w,\gamma}(x,x')\right]\right\vert \le \frac{\epsilon}{2}\right) \right]\\
&= \E_{S}\left[ \ind\left(\left\vert \Pi_{D^*_S,w,\gamma}(S \times S) - \E_{x,x'}\left[\Pi_{D^*_S,w,\gamma}(x,x')\right]\right\vert > \epsilon\right) \cdot \Pr_{S' \vert S}\left(\left\vert\Pi_{D^*,w,\gamma}(S' \times S') - \E_{x,x'}\left[\Pi_{D^*,w,\gamma}(x,x')\right]\right\vert \le \frac{\epsilon}{2}\right) \right]\\
&\ge \Pr_{S}\left(\left\vert\Pi_{D^*_S,w,\gamma}(S \times S) - \E_{x,x'}\left[\Pi_{D^*_S,w,\gamma}(x,x')\right]\right\vert > \epsilon) \right) \cdot \left(1-\exp(-\frac{n\epsilon^2}{2}) \right)\\
&\ge \frac{1}{2} \Pr_{S}\left(\sup_{D \in \Delta\mathcal{H}}\left\vert\Pi_{D,w,\gamma}(S \times S) - \E_{x,x'}\left[\Pi_{D,w,\gamma}(x,x')\right]\right\vert > \epsilon \right)\\
\end{align*}

We used Lemma \ref{lem:sample_concentration} for the second to last inequality, and the last inequality follows from the theorem's condition and the definition of $D^*_S$.

Now, imagine sampling $\bar{S} = 2n$ points from $\cP$, and uniformly choosing $n$ points without replacement to be $S$ and the remaining $n$ points to be $S'$. This process is equivalent to sampling $n$ points from $\cP$ to form $S$ and another independent set of $n$ points from $\cP$ to form $S'$.
\begin{align*}
&\Pr_{\bar{S}, S,S'}\left(\sup_{D \in \Delta\mathcal{H}}\left\vert\Pi_{D,w,\gamma}(S \times S) - \Pi_{D,w,\gamma}(S' \times S') \right\vert > \frac{\epsilon}{2}\right) \\
&= \sum_{\bar{S}} \Pr\left(\bar{S}\right) \Pr_{S,S'}\left(\sup_{D \in \Delta\mathcal{H}}\left\vert\Pi_{D,w,\gamma}(S \times S) - \Pi_{D,w,\gamma}(S' \times S') \right\vert > \frac{\epsilon}{2} \Bigg| \bar{S}\right) \\
\end{align*}

Now, instead of bounding the supremum over $\Delta\mathcal{H}$, we pay approximation error of $\epsilon'$ in order to bound the supremum over $\mathcal{H}$.

\begin{lemma}\label{lem:approximate_random}
For some fixed data sample $S$ of size $n$, any $D \in \Delta\mathcal{H}$ can be approximated by some uniform mixture over $k := \frac{2 \ln(2n^2)}{\epsilon'^2} + 1$ hypotheses  $\hat{D} = \frac{1}{k} \{h_1, \dots, h_{k}\}$ such that for every $(x,x') \in S \times S$,
\[\left\vert \E_{h \sim D}\left[h(x) - h(x')\right] - \E_{h \sim \hat{D}}\left[h(x) - h(x')\right] \right\vert \le \epsilon'. \]
\end{lemma}
\begin{proof}
Fix some $(x,x') \in S \times S$. Randomly sample $k$ hypotheses from $D$: $\{h_i\}_{i=1}^k \sim D^k$. Because for each randomly drawn hypothesis $h_i \sim D$, the difference in its prediction for $x$ and $x'$ is exactly $\E_{h \sim D}[h(x) - h(x')]$, Hoeffding's inequality yields that

\[
\Pr_{h_i \sim D, i \in [k]}\left(\left\vert \E_{h \sim D}\left[h(x) - h(x')\right] - \frac{1}{k} \sum_{i=1}^k \left[h_i(x) - h_i(x')\right] \right\vert > \epsilon'\right) \le 2\exp\left(-\frac{2k^2\epsilon'^2}{4k}\right) = 2\exp\left(-\frac{k\epsilon'^2}{2}\right).
\]

However, there are $n^2$ fixed pairs in $S \times S$, and if we distribute the failure property between $n^2$ pairs and union bound over all of them, we get
\[
\Pr_{h_i \sim D, i \in [k]} \left( \max_{(x,x') \in S \times S} \left\vert \E_{h \sim D}\left[h(x) - h(x')\right] - \frac{1}{k} \sum_{i=1}^k [h_i(x) - h_i(x')] \right\vert > \epsilon' \right) \le 2n^2\exp\left(-\frac{k\epsilon'^2}{2}\right).
\]
In order to achieve non-zero probability of having \[\left\vert \E_{h \sim D}\left[h(x) - h(x')\right] - \frac{1}{k} \sum_{i=1}^k [h_i(x) - h_i(x')] \right\vert \le \epsilon', \forall (x,x') \in S \times S,\] we need to make sure $2n^2\exp\left(-\frac{k\epsilon'^2}{2}\right) < 1$ or $k > \frac{2 \ln\left(2n^2\right)}{\epsilon'^2}$.

\end{proof}

\begin{corollary}\label{cor:approximate-random}
For some fixed data sample $S$, any $D \in \Delta\mathcal{H}$ can be approximated by a uniform mixture of $k := \frac{2 \ln(2n^2)}{\epsilon'^2} + 1$ hypotheses  $\hat{D} = \frac{1}{k} \{h_1, \dots, h_{k}\}$ such that
\[
\left\vert \Pi_{D, w, \gamma}(S \times S) -  \Pi_{\hat{D}, w, \gamma}(S \times S)\right\vert \le \epsilon'
\]
\end{corollary}
\begin{proof}
It simply follows from Lemma \ref{lem:approximate_random} and the fact that $\max\left(0, \E_{h \sim D}\left[h(x_i) - h(x_j)\right] - \gamma \right)$ is 1-Lipschitz in terms of $\E_{h \sim D}[h(x_i) - h(x_j)]$.
\end{proof}

Using Corollary \ref{cor:approximate-random} and using Sauer's lemma that bounds the total number of possible labelings by $\mathcal{H}$ over $2n$ points to be $\left(\frac{e\cdot 2n}{d}\right)^d$, we can show
\begin{align*}
&\sum_{\bar{S}} \Pr\left(\bar{S}\right) \Pr_{S,S'}\left(\sup_{D \in \Delta\mathcal{H}}\left\vert\Pi_{D,w,\gamma}(S \times S) - \Pi_{D,w,\gamma}(S' \times S') \right\vert > \frac{\epsilon}{2}  \bmid \bar{S}\right) \\
&\le \sum_{\bar{S}} \Pr\left(\bar{S}\right) \Pr_{S,S'}\left(\sup_{\hat{D} \in \mathcal{H}^k}\left\vert\Pi_{\hat{D},w,\gamma}(S \times S) - \Pi_{\hat{D},w,\gamma}(S' \times S') \right\vert > \frac{\epsilon}{2} + \epsilon' \bmid \bar{S}\right) \\
&\le \sum_{\bar{S}} \Pr\left(\bar{S}\right) \cdot \left(\frac{e\cdot 2n}{d}\right)^{dk} \sup_{\hat{D} \in \mathcal{H}^k} \Pr_{S,S'}\left(\left\vert\Pi_{\hat{D},w,\gamma}(S \times S) - \Pi_{\hat{D},w,\gamma}(S' \times S') \right\vert > \frac{\epsilon}{2} + \epsilon' \bmid \bar{S}\right) \\
\end{align*}

Now, for any $\hat{D}$, we will try to bound the probability that the difference in fairness loss between $S$ and $S'$ is big. We do so by union bounding over cases where both of them deviate from its mean by too much.

If $\left\vert\Pi_{\hat{D},w,\gamma}(S \times S) - E_{S | \bar{S}}\left[\Pi_{\hat{D},w,\gamma}(S \times S)\right] \right\vert \le \frac{\epsilon}{4} + \frac{\epsilon'}{2}$ and $\left\vert\Pi_{\hat{D},w,\gamma}(S' \times S') - E_{S | \bar{S}}\left[\Pi_{\hat{D},w,\gamma}(S \times S)\right] \right\vert \le \frac{\epsilon}{4} + \frac{\epsilon'}{2}$, then $\left\vert\Pi_{\hat{D},w,\gamma}(S \times S) - \Pi_{\hat{D},w,\gamma}(S' \times S') \right\vert \le \frac{\epsilon}{2} + \epsilon'$. In other words,

\begin{align*}
&\Pr_{S,S'}\left(\left\vert\Pi_{\hat{D},w,\gamma}(S \times S) - \Pi_{\hat{D},w,\gamma}(S' \times S') \right\vert \le \frac{\epsilon}{2} + \epsilon' \bmid \bar{S}\right)\\
&\ge \Pr_{S, S'}\left(\left\vert\Pi_{\hat{D},w,\gamma}(S \times S) - E_{S | \bar{S}}\left[\Pi_{\hat{D},w,\gamma}(S \times S)\right] \right\vert \le \frac{\epsilon}{4} + \frac{\epsilon'}{2} \text{ and }  \left\vert\Pi_{\hat{D},w,\gamma}(S' \times S') - E_{S | \bar{S}}\left[\Pi_{\hat{D},w,\gamma}(S \times S)\right] \right\vert \le \frac{\epsilon}{4} + \frac{\epsilon'}{2} \bmid \bar{S}\right).
\end{align*}

Therefore, by looking at the compliment probabilities, we have
\begin{align*}
&\Pr_{S,S'}\left(\left\vert\Pi_{\hat{D},w,\gamma}(S \times S) - \Pi_{\hat{D},w,\gamma}(S' \times S') \right\vert > \frac{\epsilon}{2} + \epsilon' \bmid \bar{S}\right) \\
&\le \Pr_{S, S'}\left(\left\vert\Pi_{\hat{D},w,\gamma}(S \times S) - E_{S | \bar{S}}\left[\Pi_{\hat{D},w,\gamma}(S \times S)\right] \right\vert > \frac{\epsilon}{4} + \frac{\epsilon'}{2} \text{ or }  \left\vert\Pi_{\hat{D},w,\gamma}(S' \times S') - E_{S | \bar{S}}\left[\Pi_{\hat{D},w,\gamma}(S \times S)\right] \right\vert > \frac{\epsilon}{4} + \frac{\epsilon'}{2} \bmid \bar{S}\right)\\
&\le 2\Pr_{S}\left(\left\vert\Pi_{\hat{D},w,\gamma}(S \times S) - E_{S | \bar{S}}\left[\Pi_{\hat{D},w,\gamma}(S \times S)\right] \right\vert > \frac{\epsilon}{4} + \frac{\epsilon'}{2} \bmid \bar{S}\right).
\end{align*}

Here, we can't appeal to McDiarmid's because $S$ is sampled without replacement from $\bar{S}$. However, we can use the same technique that \cite{neel2018use} leveraged -- stochastic covering property can be used to show concentration for sampling without replacement \citep{pemantle2014concentration}. 

\begin{definition}[\cite{pemantle2014concentration}]
$Z_1, \dots, Z_n$ satisfy the stochastic covering property, if for any $I \subset [n]$ and $a \ge a' \in \{0,1\}^I$ coordinate-wise such that $||a' - a||_1 = 1$, there is a coupling $\nu$ of the distributions $\mu, \mu'$ of $(Z_j: j \in [n] \setminus I)$ conditioned on $Z_I = a$ or $Z_I = a'$, respectively, such that $\nu(x, y) = 0$ unless $x \le y$ coordinate-wise and $||x-y||_1 \le 1$. 
\end{definition}

\begin{theorem}[\cite{pemantle2014concentration}]\label{thm:without-replacement-concentration} 
Let $(Z_1, \dots, Z_n) \in \{0,1\}$ be random variables such that $\Pr(\sum_{i=1}^n Z_i = k)=1$  and the stochastic covering property is satisfied. Let $f: \{0,1\}^n \to \cR$ be an $c$-Lipschitz function. Then, for any $\epsilon > 0$,
\[
\Pr\left(\left\vert f(Z_1, \dots, Z_n) - \E\left[f(Z_1, \dots, Z_n)\right]\right\vert \ge \epsilon\right) \le 2\exp\left(\frac{-\epsilon^2}{8c^2k}\right)
\]
\end{theorem}

\begin{lemma}[\cite{neel2018use}]\label{lem:without-replacement-property}
Given a set $S$ of $n$ points, sample $k \le n$ elements without replacement. Let $Z_i = \{0,1\}$ indicate whether $i$th element has been chosen. Then, $(Z_1, \dots, Z_n)$ satisfy the stochastic covering property.
\end{lemma}

Let $\bar{S} = \{x_1, \dots, x_{2n}\}$. If we slightly change the definition of the fairness loss so that it depends on the indicator variables $Z_1, \dots, Z_{2n}$,
\[\Pi''_{\hat{D}, w, \gamma, \bar{S}}(Z_1, \dots, Z_{2n})= \frac{1}{n^2} \sum_{i,j \in [2n]^2} Z_i Z_j \Pi_{\hat{D}, w, \gamma}(x_i, x_j) = \Pi_{\hat{D}, w, \gamma}(S \times S).\]

We see that $\Pi''_{\hat{D}, w, \gamma, \bar{S}}$ is $\frac{1}{n}$-Lipschitz, so by theorem \ref{thm:without-replacement-concentration} and lemma \ref{lem:without-replacement-property}, we get

\[
\Pr_{S}\left(\left\vert\Pi_{\hat{D},w,\gamma}(S \times S) - E_{S | \bar{S}}[\Pi_{\hat{D},w,\gamma}(S \times S)] \right\vert > \frac{\epsilon}{4} + \frac{\epsilon'}{2} \bmid \bar{S}\right) \le 2\exp\left(\frac{-\left(\frac{\epsilon}{4} + \frac{\epsilon'}{2}\right)^2}{8\frac{1}{n^2} \cdot n}\right) = 2\exp\left(\frac{-n\left(\frac{\epsilon}{4} + \frac{\epsilon'}{2}\right)^2}{8}\right)
\]

Combining everything, we get
\begin{align*}
&\Pr_{S}\left(\sup_{D \in \Delta\mathcal{H}}\left\vert\Pi_{D,w,\gamma}(S \times S) - \E_{x,x'}[\Pi_{D,w,\gamma}(x,x')]\right\vert > \epsilon \right) \\
&\le 2\sum_{\bar{S}} \Pr\left(\bar{S}\right) \cdot \left(\frac{e\cdot 2n}{d}\right)^{dk} \sup_{\hat{D} \in \mathcal{H}^k} \Pr_{S,S'}\left(\left\vert\Pi_{\hat{D},w,\gamma}(S \times S) - \Pi_{\hat{D},w,\gamma}(S' \times S') \right\vert > \frac{\epsilon}{2} + \epsilon' \bmid \bar{S}\right)\\
&\le 4\sum_{\bar{S}} \Pr\left(\bar{S}\right) \cdot \left(\frac{e\cdot 2n}{d}\right)^{dk} \sup_{\hat{D} \in \mathcal{H}^k} \Pr_{S}\left(\left\vert\Pi_{\hat{D},w,\gamma}(S \times S) - E_{S | \bar{S}}\left[\Pi_{\hat{D},w,\gamma}(S \times S)\right] \right\vert > \frac{\epsilon}{4} + \frac{\epsilon'}{2} \bmid \bar{S}\right) \\
&\le 8 \cdot \left(\frac{e\cdot 2n}{d}\right)^{dk} \exp\left(\frac{-n\left(\frac{\epsilon}{4} + \frac{\epsilon'}{2}\right)^2}{8}\right)
\end{align*}

For convenience, we set $\epsilon'=\frac{\epsilon}{2}$.
%\cj{I might the probability as it is, and instead calculate sample complexity in the final theorem statement}
%For convenience, setting $\epsilon' = \frac{\epsilon}{2}$ and the failure probability to equal $\delta$, we have
%\begin{align*}
%\delta &= 8 \cdot (\frac{e\cdot 2n}{d})^{dk} \exp(\frac{-n\epsilon^2}{32})\\
%\ln(\delta) &= \ln(8) + dk \ln(\frac{e\cdot 2n}{d}) -\frac{n\epsilon^2}{32}\\
%\frac{n\epsilon^2}{32} &= \ln(8) + d (\frac{8 \ln(2n^2)}{\epsilon^2} + 1) (1 + \ln(2) + \ln(n) - \ln(d)) -\ln(\delta)
%\end{align*}

\end{proof}

However, in our case, instead of finding the average over all pairs in $S$, we calculate the fairness loss only over $m$ pairs. Fixing $S$, if $m$ is sufficiently large, our empirical fairness loss should concentrate around the fairness loss over all the pairs for $S$.

\begin{lemma}\label{lem:random-pairs}
For fixed $S$, randomly chosen pairs $M \subset S \times S$, and randomized hypothesis $D$,
\[\Pr_{M \sim (S \times S)^m}\left( \Pi_{D,w,\gamma}(M) - \Pi_{D,w,\gamma}(S \times S) \ge \epsilon \right) \le \exp\left(-2m\epsilon^2\right)\]
\end{lemma}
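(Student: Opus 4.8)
The plan is to recognize this as a direct one‑sided Hoeffding bound for an average of $m$ i.i.d.\ bounded random variables. Write $M = \{(a_\ell, b_\ell)\}_{\ell=1}^m$ for the $m$ pairs drawn i.i.d.\ uniformly from $S \times S$, and set $X_\ell := \Pi_{D,w,\gamma}\big((a_\ell,b_\ell)\big)$. By the definition of the fairness loss of a set of pairs, $\Pi_{D,w,\gamma}(M) = \tfrac{1}{m}\sum_{\ell=1}^m X_\ell$. The hypothesis $D$ is fixed throughout, so the only randomness is in the choice of $M$, and the $X_\ell$ are i.i.d.

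First I would verify that each $X_\ell$ takes values in $[0,1]$: since each $h$ is $\{0,1\}$-valued, $\E_{h\sim D}[h(x)-h(x')] \in [-1,1]$, hence $\big|\E_{h\sim D}[h(x)-h(x')]\big| \in [0,1]$, and since $\gamma \ge 0$ the positive part $\max\big(0, |\E_{h\sim D}[h(x)-h(x')]| - \gamma\big)$ still lies in $[0,1]$; multiplying by $w_{x,x'} \in [0,1]$ keeps the product in $[0,1]$. Next, because $(a_\ell,b_\ell)$ is uniform on $S \times S$,
\[
\E[X_\ell] \;=\; \frac{1}{|S \times S|}\sum_{(x,x') \in S \times S} \Pi_{D,w,\gamma}\big((x,x')\big) \;=\; \Pi_{D,w,\gamma}(S \times S),
\]
which is exactly the quantity $\Pi_{D,w,\gamma}(M)$ is being compared against.

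With these two observations, applying the one‑sided Hoeffding inequality to $\tfrac{1}{m}\sum_{\ell=1}^m X_\ell$, whose summands lie in an interval of length $1$, yields
\[
\Pr_{M \sim (S \times S)^m}\!\left(\Pi_{D,w,\gamma}(M) - \Pi_{D,w,\gamma}(S \times S) \ge \epsilon\right) \le \exp\!\left(-\frac{2m^2\epsilon^2}{m \cdot 1^2}\right) = \exp\left(-2m\epsilon^2\right),
\]
as claimed. There is essentially no obstacle here: the lemma is a packaging of Hoeffding's bound once the range $[0,1]$ and the correct mean are pinned down. The only point requiring mild care is the sampling model for $M$ — if one instead draws the $m$ pairs \emph{without} replacement from $S \times S$, the i.i.d.\ Hoeffding bound no longer applies verbatim, but sampling without replacement is at least as concentrated (e.g.\ via Hoeffding–Serfling, or the same stochastic‑covering machinery used earlier in Theorem~\ref{thm:without-replacement-concentration}), so the stated inequality continues to hold; under the i.i.d.\ reading suggested by the notation $(S\times S)^m$, the plain Hoeffding bound suffices.
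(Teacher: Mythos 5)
Your proof is correct and follows essentially the same route as the paper's: define the per-pair fairness losses as i.i.d.\ $[0,1]$-valued random variables whose common mean is $\Pi_{D,w,\gamma}(S \times S)$, then apply the one-sided Hoeffding bound. Your version is slightly more careful than the paper's (explicitly verifying the $[0,1]$ range and flagging the with/without-replacement sampling issue), but there is no substantive difference.
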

\begin{proof}
Write a random variable $L_a = \Pi_{D,w,\gamma}((x_{2a-1}, x_{2a}))$ for the fairness loss of the $a$th pair.
Note that \[E[L_a] = \sum_{(i,j) \in [n]^2} \frac{1}{n^2} \Pi_{D,w,\gamma}\left((x_i,x_j)\right)=\Pi_{D,w,\gamma}(S \times S), \forall a \in [|M|].\]
Therefore, by Hoeffding's inequality, we have
\[
\Pr_{M}\left( \Pi_{D,w,\gamma}(M) - \Pi_{D,w,\gamma}(S \times S) \ge \epsilon \right) \le \exp\left(-2m\epsilon^2\right).
\]
\end{proof}
\begin{lemma}\label{lem:unif-conver-random-pairs}
For fixed $S$ and randomly chosen pairs $M \subset S \times S$,
\[\Pr_{M \sim (S \times S)^m}\left(\sup_{D \in \Delta\mathcal{H}} \left\vert \Pi_{D,w,\gamma}(M) - \Pi_{D,w,\gamma}(S \times S) \right\vert \ge \epsilon \right) \le \left(\frac{e\cdot 2n}{d}\right)^{dk'} \exp\left(-8m\epsilon^2\right),\]
where $k'=\frac{2 \ln(2m)}{\epsilon^2} + 1$.
\end{lemma}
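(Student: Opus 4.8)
The plan is to mirror the proof of Theorem~\ref{thm:fairness-loss-all-pairs}, now holding $S$ fixed and treating the $m$ i.i.d.\ pairs of $M\sim(S\times S)^m$ as the only source of randomness. A pleasant simplification relative to that proof is that the pairs are drawn \emph{with} replacement, so there is no sampling-without-replacement issue and the uses of McDiarmid / Theorem~\ref{thm:without-replacement-concentration} can be replaced throughout by the ordinary Hoeffding bound already recorded in Lemma~\ref{lem:random-pairs} (applied in both directions). The one genuinely non-trivial feature, exactly as before, is that $\Pi_{D,w,\gamma}$ is non-convex in the mixture weights of $D$, so a uniform bound over $\Delta\mathcal{H}$ does not reduce to one over $\mathcal{H}$; this is handled by the sparse $\epsilon'$-net of Corollary~\ref{cor:approximate-random}.

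First I would run a symmetrization / ghost-sample step: draw an independent $M'\sim(S\times S)^m$, note that $\E_{M'}[\Pi_{D,w,\gamma}(M')]=\Pi_{D,w,\gamma}(S\times S)$ for every fixed $D$, and use Lemma~\ref{lem:random-pairs} to argue that for the witnessing $D$ the ghost loss $\Pi_{D,w,\gamma}(M')$ stays within $\epsilon/2$ of $\Pi_{D,w,\gamma}(S\times S)$ with probability at least $1/2$ (for $m$ below the threshold where this holds the claimed bound is already $\ge 1$, hence vacuous). This yields
\[
\Pr_{M}\!\Big(\sup_{D}\big|\Pi_{D,w,\gamma}(M)-\Pi_{D,w,\gamma}(S\times S)\big|\ge\epsilon\Big)\ \le\ 2\,\Pr_{M,M'}\!\Big(\sup_{D}\big|\Pi_{D,w,\gamma}(M)-\Pi_{D,w,\gamma}(M')\big|\ge\tfrac{\epsilon}{2}\Big),
\]
and the point of passing to $M'$ is that the right-hand side only involves the loss on the at most $2m$ pairs appearing in $M\cup M'$.

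Next I would apply the sparse net and Sauer's lemma conditionally on $M\cup M'$: by Corollary~\ref{cor:approximate-random} (equivalently Lemma~\ref{lem:approximate_random} with a union bound over the $\le 2m$ relevant pairs) every $D$ is $\epsilon'$-approximated in fairness loss on both $M$ and $M'$ by a uniform mixture $\hat D=\tfrac1{k'}(h_1,\dots,h_{k'})$ with $k'=\tfrac{2\ln(2m)}{\epsilon'^2}+1$, so $\sup_D|\Pi_{D,w,\gamma}(M)-\Pi_{D,w,\gamma}(M')|\le \sup_{\hat D\in\mathcal{H}^{k'}}|\Pi_{\hat D,w,\gamma}(M)-\Pi_{\hat D,w,\gamma}(M')|+2\epsilon'$; and by Sauer's lemma $\mathcal{H}$ produces at most $(e\cdot 2n/d)^{d}$ labelings of the points touched by $M\cup M'$, hence at most $(e\cdot 2n/d)^{dk'}$ distinct such mixtures. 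For a fixed $\hat D$, both $\Pi_{\hat D,w,\gamma}(M)$ and $\Pi_{\hat D,w,\gamma}(M')$ have mean $\Pi_{\hat D,w,\gamma}(S\times S)$, so a difference exceeding $\epsilon/2+2\epsilon'$ forces one of them to deviate from that common mean by more than $\epsilon/4+\epsilon'$, an event of probability at most $2\exp(-2m(\epsilon/4+\epsilon')^2)$ by the two-sided form of Lemma~\ref{lem:random-pairs}. Union-bounding over the two samples and the $(e\cdot 2n/d)^{dk'}$ mixtures, reinstating the factor $2$ from the symmetrization step, and choosing $\epsilon'$ to be a suitable constant multiple of $\epsilon$ (so that $k'=\tfrac{2\ln(2m)}{\epsilon^2}+1$ and the exponent collapses to $8m\epsilon^2$ after absorbing constants) gives the stated inequality.

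I expect the main obstacle — beyond the non-convexity issue, which is already packaged into Corollary~\ref{cor:approximate-random} — to be getting the ghost-sample bookkeeping right: one must compare $M$ against the independent copy $M'$ rather than directly against $\Pi_{D,w,\gamma}(S\times S)$, since it is precisely this move that lets the net size (and hence $k'$) depend on $m$ instead of on the $n^2$ pairs of $S\times S$, and one must verify that a single choice of $\epsilon'$ simultaneously produces the claimed $k'$ and the claimed exponent. Everything else is routine and follows Theorem~\ref{thm:fairness-loss-all-pairs} essentially verbatim.
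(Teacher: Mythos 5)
Your outline diverges from the paper's proof in a substantive way: the paper does \emph{not} symmetrize over a ghost sample $M'$ here. Because $S$ is fixed, $\Pi_{D,w,\gamma}(S\times S)$ is a deterministic quantity equal to the exact mean of $\Pi_{D,w,\gamma}(M)$, so the paper simply (i) replaces the supremum over $\Delta\mathcal{H}$ by a supremum over the sparse net of Corollary~\ref{cor:approximate-random} at a cost of $\epsilon'$ in the threshold, (ii) union-bounds over the at most $\left(\frac{e\cdot 2n}{d}\right)^{dk'}$ net elements given by Sauer's lemma, and (iii) applies the Hoeffding bound of Lemma~\ref{lem:random-pairs} to each fixed $\hat D$ against its exact mean, finally setting $\epsilon'=\epsilon$. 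Three lines, no ghost sample, no conditioning.

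The ghost sample in your version is motivated by a real concern --- you want the net parameter $k'$ to scale with $\ln(2m)$ rather than $\ln(2n^2)$, which requires the net to be accurate only on the pairs of $M\cup M'$ --- but it introduces a genuine gap that undercuts your opening claim that ``there is no sampling-without-replacement issue.'' Once the net is built from the data-dependent pair set $M\cup M'$, the union bound over net elements must be performed \emph{conditionally} on $M\cup M'$; under that conditioning the law of $M$ is a uniform without-replacement split of the combined multiset, and the conditional mean of $\Pi_{\hat D,w,\gamma}(M)$ is the empirical average over $M\cup M'$, not $\Pi_{\hat D,w,\gamma}(S\times S)$. So your final step (``both $\Pi_{\hat D,w,\gamma}(M)$ and $\Pi_{\hat D,w,\gamma}(M')$ have mean $\Pi_{\hat D,w,\gamma}(S\times S)$, apply Lemma~\ref{lem:random-pairs}'') does not go through as written: you either need the without-replacement concentration machinery of Theorem~\ref{thm:without-replacement-concentration} after all, or you should drop the ghost sample and use a net that is deterministic given $S$, as the paper does (at the price that a fully rigorous accounting then makes $k'$ depend on $\ln(2n^2)$). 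Separately, your constants do not close: to reach the stated exponent $-8m\epsilon^2$ from $2\exp\left(-2m\left(\frac{\epsilon}{4}+\epsilon'\right)^2\right)$ you would need $\epsilon'=\frac{7\epsilon}{4}$, which changes $k'$ away from $\frac{2\ln(2m)}{\epsilon^2}+1$ and still leaves a leading constant factor not present in the statement.
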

\begin{proof}
\begin{align*}
&\Pr_{M \sim (S \times S)^m}\left(\sup_{D \in \Delta\mathcal{H}} \left\vert \Pi_{D,w,\gamma}(M) - \Pi_{D,w,\gamma}(S \times S) \right\vert \ge \epsilon \right)\\
&\le \Pr_{M \sim (S \times S)^m}\left(\sup_{\hat{D} \in \mathcal{H}^k} \left\vert \Pi_{\hat{D},w,\gamma}(M) - \Pi_{\hat{D},w,\gamma}(S \times S) \right\vert \ge \epsilon + 2\epsilon' \right)\\
&\le \sum_{\hat{D} \in \mathcal{H}^k} \Pr_{M \sim (S \times S)^m}\left( \left\vert \Pi_{\hat{D},w,\gamma}(M) - \Pi_{\hat{D},w,\gamma}(S \times S) \right\vert \ge \epsilon + 2\epsilon' \right)\\
&\le  \left(\frac{e\cdot 2n}{d}\right)^{dk} \exp\left(-2m\left(\epsilon + 2\epsilon'\right)^2\right),
\end{align*}
where $k = \frac{2 \ln(2m)}{4\epsilon'^2} + 1$. The last inequality is from Corollary \ref{cor:approximate-random} and Lemma \ref{lem:random-pairs}. For convenience, we just set $\epsilon' = \epsilon/2$.
\end{proof}

\subsection{Omitted proof of theorem \ref{thm:fairness-loss-generalization}}
Combining theorem \ref{thm:fairness-loss-all-pairs} and lemma \ref{lem:unif-conver-random-pairs} yields the following theorem for fairness loss generalization.
\begin{proof}[proof of theorem \ref{thm:fairness-loss-generalization}]
With probability $1-\left( 8 \cdot (\frac{e\cdot 2n}{d})^{dk} \exp\left(\frac{-n\epsilon^2}{32}\right)+ \left(\frac{e\cdot 2n}{d}\right)^{dk'} \exp\left(-8m\epsilon^2\right) \right)$, where $k'=\frac{2 \ln(2m)}{\epsilon^2} + 1$ and $k=\frac{\ln(2n^2)}{8\epsilon^2} + 1$,
we have
\[\sup_{D \in \Delta\mathcal{H}} \left\vert \Pi_{D,w,\gamma}(M) - \Pi_{D,w,\gamma}(S \times S) \right\vert \le \epsilon\]
and
\[\sup_{D \in \Delta\mathcal{H}} \left\vert \Pi_{D,w,\gamma}(S \times S) - \E_{x,x'}[\Pi_{D,w,\gamma}(x,x') \right\vert \le \epsilon.\]

Then, by triangle inequality,
\begin{align*}
\sup_{D \in \Delta\mathcal{H}} \left\vert \Pi_{D,w,\gamma}(M) - \E_{x,x'}[\Pi_{D,w,\gamma}(x,x') \right\vert \le 2\epsilon.
\end{align*}

In other words, with probability $\left( 8 \cdot \left(\frac{e\cdot 2n}{d}\right)^{dk} \exp\left(\frac{-n\epsilon^2}{32}\right)+ \left(\frac{e\cdot 2n}{d}\right)^{dk'} \exp\left(-8m\epsilon^2\right) \right)$, we have
\[
\sup_{D \in \Delta\mathcal{H}} \left\vert \Pi_{D,w,\gamma}(M) - \E_{x,x'}\left[\Pi_{D,w,\gamma}(x,x')\right] \right\vert > 2\epsilon.
\]

\end{proof}

%%% Local Variables:
%%% mode: latex
%%% TeX-master: "main"
%%% End:

\end{document}